\documentclass{article}





\usepackage[preprint,nonatbib]{neurips_2022}
\usepackage{rotating}

\usepackage[utf8]{inputenc} 
\usepackage[T1]{fontenc}    
\usepackage{hyperref}       
\usepackage{url}            
\usepackage{booktabs}       
\usepackage{amsfonts}       
\usepackage{nicefrac}       
\usepackage{microtype}      
\usepackage[dvipsnames]{xcolor}

\usepackage{amsmath}
\usepackage{amssymb}
\usepackage{mathtools}
\usepackage{amsthm}

\usepackage{enumitem}

\usepackage{amssymb}
\usepackage{pifont}
\newcommand{\cmark}{\textcolor{Green}{\ding{51}}}%
\newcommand{\xmark}{\textcolor{Red}{\ding{55}}}%

\usepackage{dsfont}

\usepackage[capitalize,noabbrev]{cleveref}

\theoremstyle{plain}

\theoremstyle{definition}

\theoremstyle{remark}

\usepackage[textsize=tiny]{todonotes}

\usepackage{overpic}
\usepackage{subcaption}
\usepackage{amsmath}
\usepackage{amsthm}

\newtheorem{prop}{Proposition}
\newtheorem{thm}{Theorem}
\newtheorem{lem}{Lemma}

\newcommand{\RR}{\mathbb{R}}
\newcommand{\vecc}{\boldsymbol}

\usepackage{mathtools}

\usepackage{stackengine}


\definecolor{dkgray}{rgb}{99,99,99}
\definecolor{darkred}{RGB}{228,26,28}
\definecolor{darkblue}{RGB}{44,127,184}
\definecolor{magentaCB}{RGB}{221,28,119}

\definecolor{morange}{RGB}{255, 187, 0}
\definecolor{mblue}{RGB}{ 0, 161, 241}

\newtheorem*{theorem*}{Theorem}

\makeatletter
\newcommand{\printfnsymbol}[1]{%
  \textsuperscript{\@fnsymbol{#1}}%
}
\makeatother

\usepackage{wrapfig}


\title{NoisyMix: Boosting Model Robustness to Common Corruptions}

%

\author{%
  N. Benjamin Erichson\thanks{Equal contribution. Contact: \url{erichson@pitt.edu} and \url{soon.hoe.lim@su.se}.} \\
  University of Pittsburgh\\
   \And
   Soon Hoe Lim$^*$ \\
   Nordita, KTH\\
   \And
   Winnie Xu \\
   University of Toronto \\
   \AND
   Francisco Utera \\
   University of Pittsburgh \\
   \And
   Ziang Cao \\
   University of Pittsburgh \\
   \And
	Michael W. Mahoney \\
	ICSI and UC Berkeley \\
}

\hypersetup{ %
	pdftitle={},
	pdfauthor={},
	pdfkeywords={},
	pdfborder=0 0 0,
	pdfpagemode=UseNone,
	colorlinks=true,
	linkcolor=magenta,
	citecolor=magenta,
	filecolor=magenta,
	urlcolor=magenta,
}

\begin{document}

\maketitle

\begin{abstract}
For many real-world applications, obtaining stable and robust statistical performance is more important than simply achieving state-of-the-art predictive test accuracy, and thus robustness of neural networks is an increasingly important topic. Relatedly, data augmentation schemes have been shown to improve robustness with respect to input perturbations and domain shifts. Motivated by this, we introduce \textit{NoisyMix}, a novel training scheme that promotes stability as well as leverages noisy augmentations in input and feature space to improve both model robustness and in-domain accuracy. \textit{NoisyMix} produces models that are consistently more robust and that provide well-calibrated estimates of class membership probabilities. We demonstrate the benefits of \textit{NoisyMix}  on a range of benchmark datasets, including ImageNet-C, ImageNet-R, and ImageNet-P. Moreover, we provide theory to understand implicit regularization and robustness of \textit{NoisyMix}.

\end{abstract}

\section{Introduction}
\label{submission}

While deep learning has achieved remarkable results in computer vision and natural language processing, much of the success is driven by improving test accuracy. However, it is well-known that deep learning models are typically brittle and sensitive to noisy and adversarial environments. 
This limits their applicability in many real-world problems which require, at a minimum, that deep learning methods produce stable statistical predictions.
One can distinguish between structural stability (i.e., how sensitive is a model's prediction to small perturbations of the weights, or model parameters) and input stability (i.e., how sensitive is a model's prediction to small perturbations of the input data). Structural stability is particularly important when compression techniques (e.g., quantization) or different hardware systems introduce small errors to the model's weights.
Input stability is important when models (e.g., a vision model in a self-driving car) should  operate reliably in noisy environments. 
Overall, robustness is an important topic within machine learning and deep learning, but the many facets of robustness prohibit the use of a single metric for measuring it~\cite{hendrycks2021many}. 

In this work, we are specifically interested in studying input stability (robustness) with respect to common data corruptions and domain shifts that naturally occur in many real-world applications.
To do so, we use datasets such as ImageNet-C~\cite{hendrycks2018benchmarking} and ImageNet-R~\cite{hendrycks2021many}.
ImageNet-C provides examples to test the robustness of a model with respect to common corruptions, including noise sources such as white noise, weather variations such as snow, and digital distortions such as image compression.
ImageNet-R provides examples to test robustness to naturally occurring domain shifts in form of abstract visual renditions, including graffiti, origami, or cartoons. 
State-of-the-art models often fail when facing such common corruptions and domain shifts. 
For instance, the predictive accuracy of a ResNet-50~\cite{he2016identity} trained on ImageNet drops by over $35\%$, when evaluated on ImageNet-C~\cite{hendrycks2018benchmarking}. 
%

%
\begin{wrapfigure}{r}{0.5\textwidth}
  \begin{center}\vspace{-0.2cm}
    \includegraphics[width=0.48\textwidth]{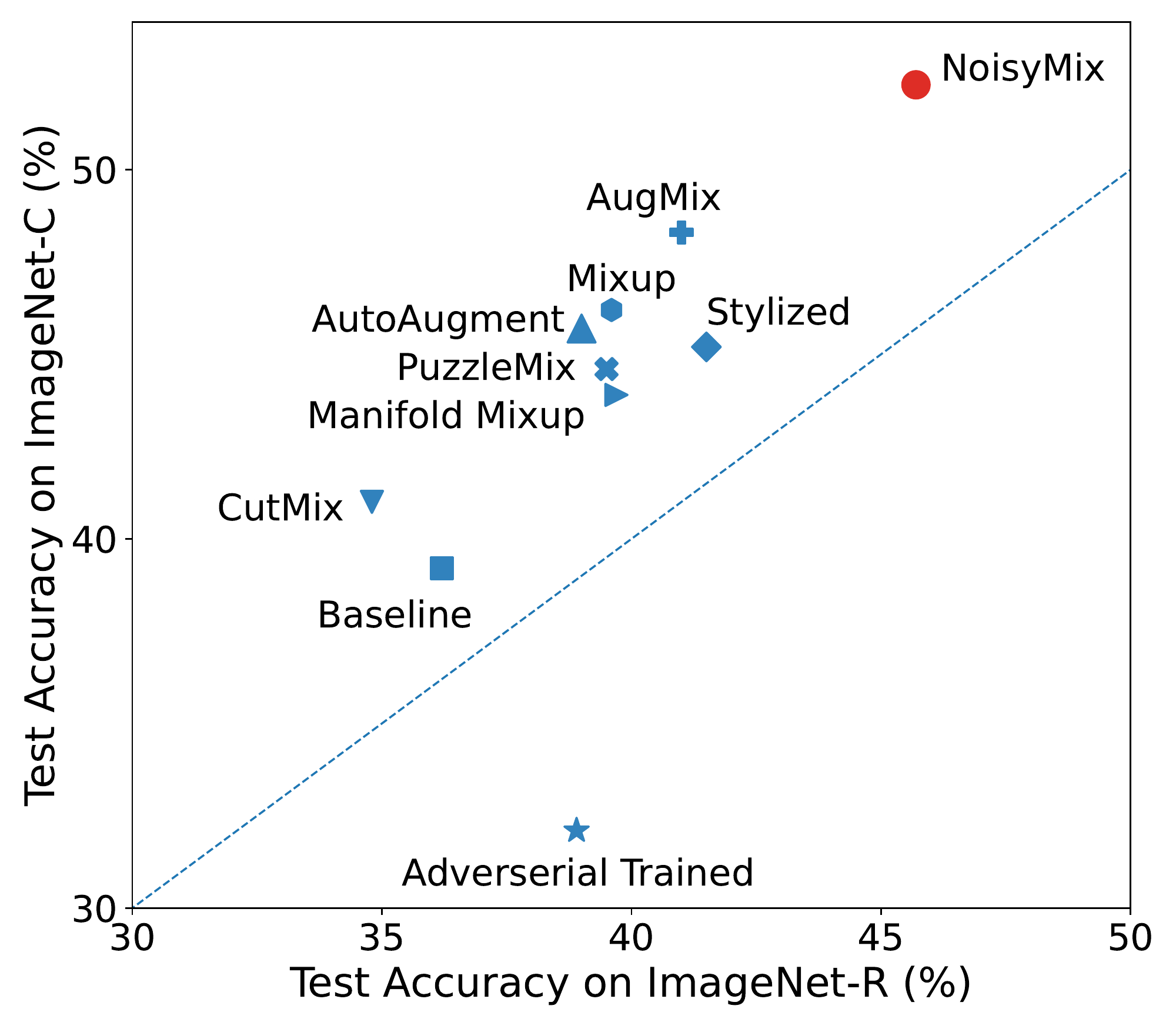}
  \end{center}
    \vspace{-0.4cm}
	\caption{ResNet-50 models trained with data augmentation methods. \textit{NoisyMix} considerably improves the test accuracy on ImageNet-C and ImageNet-R, indicating improved robustness to common corruptions and domain shifts.}\label{fig:robustCvsR}
\end{wrapfigure}
Data augmentation methods such as \textit{Mixup}~\cite{zhang2018mixup}, \textit{AutoAugment}~\cite{cubuk2019autoaugment}, training on \textit{stylized} ImageNet~\cite{geirhos2018imagenet}, and \textit{AugMix}~\cite{hendrycks2020augmix} have the ability to improve the resilience to various data perturbations and domain shifts.
Figure~\ref{fig:robustCvsR} shows the obtained test
accuracies on ImageNet-C and ImageNet-R for ResNet-50 models trained with different augmentation schemes on ImageNet (see Sec.~\ref{sec:results} for details). 
Clearly, 
data augmentation helps to achieve better test accuracies on these corrupted and domain-shifted test sets when compared to a ``vanilla'' trained model.

Among these, our proposed method, \textit{NoisyMix}, is the most effective in improving robustness.
In contrast, adversarial training~\cite{shafahi2019adversarial} decreases robustness to common corruptions. The reason for this is that while adversarial training helps to improve robustness to perturbations that are manifested in the high-frequency domain, it reduces robustness to common corruptions that are often manifested in the low-frequency domain~\cite{fourier}.

\textbf{Contributions.}  
Our main contributions in this paper are summarized as follows.

\begin{itemize}[leftmargin=*]
    \item We formulate and study a novel, theoretically justified training scheme, \textit{NoisyMix},  that substantially outperforms standard stability training and data augmentation schemes in improving robustness to common corruptions and in-domain test accuracy.

    \item We study \textit{NoisyMix} via the lens of implicit regularization and provide theoretical results that identify the  effects (see Theorem \ref{thm_implicitreg}-\ref{thm_jsd} in App.~\ref{sm_secA}). Importantly, we show that, under assumptions similar to those made in \cite{lamb2019interpolated,lim2021noisy}, minimizing the {\it NoisyMix} loss corresponds to minimizing an upper bound on the sum of an adversarial loss, a stability objective, and data-dependent regularizers (see Theorem \ref{thm_advbound} in App.~\ref{sm_secD}).
    This suggests that {\it NoisyMix} provides an inductive bias towards improving model robustness with respect to common corruptions, when compared to standard training.

    \item We provide empirical results to show that \textit{NoisyMix}  improves the robust accuracy on ImageNet-C and ImageNet-R by over $4\%$ compared to AugMix, while achieving comparable accuracy on the clean ImageNet validation set. Moreover, \textit{NoisyMix} has comparable training costs to AugMix. We also show that \textit{NoisyMix} achieves state-of-the-art performance on CIFAR-10-C and CIFAR-100-C, using preactivated ResNet-18 and Wide-ResNet-28x2 architectures.
\end{itemize}

\noindent {\bf Notation.}
$[K] := \{1,\dots, K\}$, $\odot$ denotes Hadamard product, $\mathds{1}$ denotes the vector with all components equal one. For a vector $v$, $\|v\|_p$ denotes its $l_p$ norm for $p > 0$. $M_\lambda(a,b) := \lambda a + (1-\lambda) b$, for  random variables $a, b, \lambda$.

\section{Related Work}\label{sec:related}

The study of stability of modelling methods for  real-world phenomena has a long history and is central in statistics, numerical analysis, and dynamical systems~\cite{yu2013stability}.
In this work, we are mainly concerned with improving input stability of deep neural networks for computer vision tasks.
In particular, there are two types of input perturbations that have recently led to a surge of interest: (i) adversarial perturbations, and (ii) common corruptions. 
The class of adversarial perturbations consists of artificially crafted perturbations specifically designed to fool neural networks while being imperceptible for the human eye~\cite{szegedy2013intriguing,goodfellow2014explaining}. 
Common corruptions are a class of perturbations that naturally occur in real-world applications~\cite{hendrycks2018benchmarking}, and  are often more relevant in real-world science and engineering applications than adversarial perturbations~\cite{rusak2020simple}.

Early work by \cite{dodge2016understanding} has revealed how sensitive neural networks (trained on high quality images) are to  slight degradation of image quality and common corruptions. 
\cite{hendrycks2018benchmarking} introduced a more extensive benchmark dataset for testing robustness on a comprehensive set of common corruptions with varying intensity levels. Subsequently, it has been shown that data augmentation and noise injection schemes can help to improve robustness~\cite{fourier} to common corruptions.
As an alternative, \cite{hendrycks2019using} and \cite{xie2020self} showed that pretraining on larger and more diverse datasets (e.g., JFT-300M~\cite{hinton2015distilling}) can help to improve robustness, but~\cite{hendrycks2021many} showed that these strategies do not consistently do better if a broader range of distribution shifts is considered.
The disadvantages of pretraining on larger datasets are the increased computational costs and the limited availability of massive computer vision datasets. Hence, we limit our discussion in this paper to data augmentation and noise injection strategies that can be applied to standard architectures trained on common computer vision benchmark datasets (e.g., ImageNet).

Most relevant for our work are data augmentation methods that aim to reduce the generalization error and improve robustness by introducing implicit regularization~\cite{Mah12,KGC17_TR} into the training process. 
By data augmentation, we refer to schemes that do not introduce new data but train a model with virtual training examples (e.g., proper transformations of the original data) in the vicinity of a given training dataset. Basic examples are random cropping and horizontal flipping of input images~\cite{krizhevsky2012imagenet}.

Mixup~\cite{zhang2018mixup} is a popular data augmentation method, which creates new training examples by forming linear interpolations between random pairs of examples and their corresponding labels. 
Despite its simplicity it is an extremely efficient scheme and helps to improve generalization and robustness of models with minimal computation overhead.
Motivated by Mixup, a range of other innovations have been proposed subsequently. 
Manifold Mixup~\cite{verma2019manifold} extends the idea of interpolating between data points in input space to hidden representations, leading to models that have smoother decision boundaries. 
Noisy Feature Mixup~\cite{lim2021noisy} generalizes the idea of both Mixup and Manifold Mixup by taking linear combinations of pairs of perturbed input and hidden representations.
Another approach involves zeroing out parts of an input image~\cite{devries2017improved} to prevent a model from focusing on a limited image region. 
CutMix~\cite{yun2019cutmix} extends this idea by replacing the removed parts with a patch from another image.
PuzzleMix~\cite{kim2020puzzle} is a mixup method that further improves CutMix by leveraging saliency information and local statistics of the original input data to decide how to mix two data points.

\cite{geirhos2018imagenet} proposed to train models on a stylized version of ImageNet, which helps to increase shape bias and in turn improves robustness to common corruptions. To do so, they use a style transfer network that applies artwork styles to images that are then used to train the model. Similar good results are achieved by AutoAugment~\cite{cubuk2019autoaugment}, a method that automatically searches for improved data augmentation policies.
AugMix~\cite{hendrycks2020augmix} is designed to improve robustness to common corruptions, which is achieved by leveraging a diverse set of basic data augmentations (e.g., translate, posterize, solarize) and applying consistency regularization (using the Jensen-Shannon divergence loss). This approach improves robustness when compared to the previously discussed data augmentation methods. Recently, \cite{wang2021augmax} proposed AugMax, a strong data augmentation scheme, that learns an adversarial mixture of several sampled augmentation operators.

Noise injections are another form of data augmentations, which introduce regularization into the training process~\cite{bishop1995training}. Most commonly, the models are trained  with noise perturbed inputs~\cite{an1996effects}. Noise can also be injected into the activation functions~\cite{gulcehre2016noisy}, or the hidden layers of a neural network~\cite{camuto2020explicit,lim2021noisy}.
Recently, \cite{rusak2020simple} has demonstrated that training a model with various noise types helps to improve robustness to unseen common corruptions.

Another recent approach is based on model compression~\cite{diffenderfer2021a}. In this work, the authors show that `lottery ticket-style' pruning methods applied to overparameterized models can substantially improve model robustness to common corruptions.

\section{Method}\label{sec:method}

\subsection{Formulation of \textit{NoisyMix}}
We consider multi-class classification with $K$ labels. Denote the input space by $\mathcal{X} \subset \RR^d$ and the output space by $\mathcal{Y} := \Delta^{K-1}$, 
where $\Delta^{K-1}$ is the probability simplex, i.e., a categorical distribution over $K$ classes. The  classifier, $g$, is constructed from a learnable deep neural network map $p: \mathcal{X} \to \Delta^{K-1}$, mapping an input $x$ to its label, $g(x) = \arg\max_k p^k(x) \in [K]$. We are given a training set, $\mathcal{Z}_n := \{( x_i, y_i)\}_{i \in [n]} \cup \{( A(x_i), y_i)\}_{i \in [n]} $, consisting of $2n$ pairs of input and one-hot label, with 
each training pair $z_{i} := (x_i, y_i), z_{ami} := (A(x_i), y_i) \in \mathcal{X} \times \mathcal{Y}$.  The training pairs $(x_i, y_i)$ are drawn independently from a ground-truth distribution $\mathcal{D} := \mathcal{D}_x \times \mathcal{D}_y$. Here, $A$  denotes the random perturbation of an input, i.e., the {\it AugmentAndMix} function in  \cite{hendrycks2020augmix}. In the sequel, we denote $\mathcal{A}(x)$  as the underlying conditional (on $x$) distribution from which the $A(x)$ is sampled from.  

More precisely, for an (original) input $x$, we construct a virtual data point $x_{am}$ as
\begin{equation}
    A(x) = m x + (1-m) \sum_{i=i}^3 w_i C(x) \sim \mathcal{A}(x),   
\end{equation}
where $m \sim Beta(\alpha, \alpha)$, the $w_i \sim Dirichlet(\alpha, \dots, \alpha)$ (with $\alpha := 1$), and $C$ uniformly drawn from a set of operations whose elements including compositions of transformations such as rotating, translating, autocontrasting, equalizing, posterizing, solarizing, and shearing. 
They can be viewed as randomly perturbed input data, i.e., $A(x) = x + \delta x$, where the random perturbation  $\delta x := (1-m) \left(\sum_{i=1}^3 w_i C(x) - x  \right)$ is additive and data-dependent.
The transformed data are highly diverse because the operations are sampled stochastically and then layered on the same input image. Such diverse transformations are crucial for enabling model robustness by discouraging the model from memorizing fixed augmentations \cite{hendrycks2020augmix}.

Within this setting, we propose the expected \textit{NoisyMix}  loss:
\begin{equation}
    L^{\textit{NoisyMix} } = L^{NFM} + \gamma L^{stability},
\end{equation}
where $\gamma \geq 0$. This loss is a sum of two components: the expected NFM loss \cite{lim2021noisy} and an expected  stability loss \cite{zheng2016improving}. As in \cite{lim2021noisy}, the expected noisy feature mixup (NFM) loss is given by:
\begin{align*}
    L^{NFM} &= \mathbb{E}_{(x, y), (x', y') \sim \mathcal{D} } 
     \mathbb{E}_{\lambda 
    \sim Beta(\alpha, \beta)} \mathbb{E}_{\vecc{\xi} \sim \mathcal{Q}} [  l(p(M_{\lambda, \vecc{
\xi}} (x, x')), M_{\lambda}(y,y'))],
\end{align*}
for some loss function (e.g., cross-entropy) $l:\Delta^{K-1} \times \Delta^{K-1} \to [0, \infty)$ (note that the dependence of both $l$ and $p$ on the learnable parameter $\theta$ is suppressed in the notation), $\vecc{\xi} := (\xi^{add}, \xi^{mult})$ are drawn from some probability distribution $\mathcal{Q}$ with finite first two moments, and
\begin{align*}
    M_{\lambda, \vecc{
\xi}} (x,x') &= (\mathds{1}+\sigma_{1} \xi^{mult}) \odot M_\lambda(x, x') + \sigma_{2} \xi^{add},
\end{align*} 
where $\sigma_1$ and $\sigma_2$ are tuning parameters.
In other words, this approach combines stability training with mixup and noise injections at the level of input and hidden layers. 
In App.~\ref{sm_demon} we illustrate that the combination of these techniques introduce implicit regularization that makes the decision boundary smoother. In turn, this helps improving robustness when predicting on out of distribution data. 

We choose the distance measure $D$ in the expected stability loss to be the Jensen-Shannon divergence (JSD) and minimize the deviation between the distribution on the clean input data and that on the transformed input data, i.e., we minimize $L^{stability} := L^{JSD}$, given by:
\begin{align*}
    & \mathbb{E}_{x, x' \sim \mathcal{D}_x} \mathbb{E}_{x_{am} \sim \mathcal{A}(x), x_{am}' \sim \mathcal{A}(x')}
     \mathbb{E}_{\lambda 
    \sim Beta(\alpha, \beta)} \mathbb{E}_{\vecc{\xi} \sim \mathcal{Q}} JS_\pi(p(M_{\lambda, \vecc{\xi}} (x, x')), p(M_{\lambda, \vecc{\xi}}(x_{am}, x'_{am}) )),
\end{align*}
where $JS_\pi$ denotes Jensen-Shannon divergence with the weights $\pi$, defined as follows: for $p_1, p_2 \in \Delta^{K-1}$ with the corresponding weights $\pi := (\pi_1, \pi_2) \in \Delta$, the JSD~\cite{lin1991divergence} between $p_1$ and $p_2$ is
\begin{align*}
    JS_\pi(p_1, p_2) &:= H(\pi_1 p_1 + \pi_2 p_2) - \pi_1 H(p_1) - \pi_2 H(p_2),
\end{align*}
with $H$ the Shannon entropy, defined as $H(p) := -\sum_{k=1}^K p_k \ln(p_k)$ for $p \in \Delta^{p-1}$. Alternatively,  $JS_\pi(p_1, p_2) =  \pi_1 KL(p_1 \| \pi_1 p_1 + \pi_2 p_2) + \pi_2 KL(p_2 \| \pi_1 p_1 + \pi_2 p_2)$, where $KL$ denotes the Kullback–Leibler divergence.

JSD measures the similarity between two probability distributions, and its square root is a true metric between distributions. 
Unlike the KL divergence and cross-entropy, it is symmetric, bounded, and does not require absolute continuity of the involved probability distributions. Similar to the KL divergence, $JS_\pi(p_1, p_2) \geq 0$, with equality if and only if $p_1 = p_2$. This follows from applying Jensen's inequality to the concave Shannon entropy. Moreover, JSD can interpolate between cross-entropy and mean absolute error for $\pi_1 \in (0,1)$ (see Proposition 1 in \cite{englesson2021generalized}). In GANs, JSD is used as a measure to quantify the similarity between the generative data distribution and the real data distribution \cite{goodfellow2020generative, weng2019gan}. We refer to \cite{lin1991divergence, englesson2021generalized} for more properties of JSD. Note that, unlike $L^{NFM}$, $L^{JSD}$ is computed without the use of the labels.

\textbf{A Note on Stability Training.}
We consider a stability training scheme on the augmented training data \cite{zheng2016improving}. 
Stability training stabilizes the output of a model against small perturbations to input data, thereby enforcing robustness. 
Since the classifiers are probabilistic in nature, we consider a probabilistic notion of robustness, which amounts to making them locally Lipschitz with respect to some distance on the input and output space.
This ensures that a small perturbation in the input will not lead to large changes (as measured by some statistical distance) in the output.

We now formalize a notion of robustness. 
Let $p > 0$. 
We say that a model $f: \mathcal{X} \to \mathcal{Y}$ is  \emph{$\alpha_p$-robust} if for any $(x,y) \sim \mathcal{D}$ such that $f(x) = y$, one has, for any data perturbation $\delta x \in \mathcal{X}$, 
\begin{equation*}
    \|\delta x\|_p \leq \alpha_p \implies f(x) = f(x+\delta x).
\end{equation*}
An analogous definition can be given for distribution-valued outputs, which leads to a notion of probabilistic robustness. 
Let  $D$ be a metric or divergence between two probability distributions. 
We say that a model $F: \mathcal{X} \to \mathcal{P}(\mathcal{Y)}$, i.e., the space of probability distributions on $\mathcal{Y}$, is \emph{$(\alpha_p, \epsilon)$-robust} with respect to $D$ if, for any $x, \delta x \in \mathcal{X}$, 
\begin{equation*}
    \|\delta x\|_p \leq \alpha_p \implies D(F(x), F(x+\delta x)) \leq \epsilon.
\end{equation*}

In practice, stability training involves formulating a loss that aims to flatten the model output in a small neighborhood of any input data, forcing the output to be similar between the original and perturbed copy. 
This is done by combining the cross-entropy loss with a suitable stability objective, encouraging model prediction to be more constant around the input data while mitigating underfitting.

\subsection{Theoretical Results}
\textit{NoisyMix}  seeks to minimize a stochastic approximation of $L^{\textit{NoisyMix} }$ by sampling a finite number of $\lambda, \vecc{\xi}, x_{am}, x'_{am}$ values and using minibatch gradient descent to minimize this loss approximation. The empirical loss to be minimized during the training is 
\begin{equation*}
    L_n^{\textit{NoisyMix} } = \frac{1}{n^2} \sum_{i=1}^n \sum_{j=1}^n \mathbb{E}_{\lambda 
    \sim Beta(\alpha, \beta)} \mathbb{E}_{\vecc{\xi} \sim \mathcal{Q}}[a_{i,j} + \gamma b_{i,j}],    
\end{equation*} 
where
\begin{align*}
    a_{i,j} &:=   l(p(M_{\lambda, \vecc{
\xi}} (x_i, x_j)), M_{\lambda}(y_i,y_j)), \\
    b_{i,j} &:= 
   JS_\pi(p(M_{\lambda, \vecc{
    \xi}} (x_i, x_j)), p(M_{\lambda, \vecc{
\xi}}(A(x_{i}), A(x_{j})) )).
\end{align*}

In App.~\ref{sm_demon}, we illustrate the effectiveness of {\it NoisyMix} in smoothing the decision boundary for a binary classification task. Overall, we observe that {\it NoisyMix} leads to the smoothest decision boundary when compared to other methods (see Fig. \ref{fig:toy_data}), and thus more robust models, since the predicted label stays the same even if the data are perturbed. 

To understand these regularizing effects of {\it NoisyMix} better, we provide some mathematical analysis via the lens of implicit regularization \cite{Mah12} in App.~\ref{sm_secA} (see Theorem \ref{thm_implicitreg} and Theorem \ref{thm_jsd}). Such lens was adopted to understand stochastic optimization \cite{ali2020implicit, smith2021origin} and regularizing effects of noise injections \cite{ camuto2020explicit,lim2021noisyRNN,gong2020maxup}. One common approach to study implicit regularization is to approximate it by an appropriate explicit regularizer, which we follow here for the regime of small mixing coefficient and noise levels.
In this regime, we shall  show that \textit{NoisyMix}  can lead to models with enhanced robustness and stability. 

Following \cite{lamb2019interpolated, lim2021noisy}, we consider the binary cross-entropy loss, setting $h(z) = \log(1+e^z)$, with the labels $y$ taking value in $\{0,1\}$ and the  classifier model $p: \RR^d \to \RR$. In the following, we assume that the model parameter  $\theta \in \Theta := \{\theta : y_i p(x_i) + (y_i - 1) p(x_i) \geq 0 \text{ for all } i \in [n] \}$. We remark that this set contains the set of all parameters with correct classifications of training samples (before applying \textit{NoisyMix}), since $ \{\theta : 1_{\{p(x_i)  \geq 0\}} = y_i \text{ for all } i \in [n] \} \subset \Theta$. Therefore, the condition of $\theta \in \Theta$ is fulfilled when the model classifies all labels correctly for the training data before applying \textit{NoisyMix}. Since the training error often becomes zero in finite time in practice, we shall study the effect of \textit{NoisyMix}  on model robustness in the regime of $\theta \in \Theta$. 

Working in the data-dependent parameter space $\Theta$, we  obtain the following result, which is an informal version of our main theoretical result (see Theorem \ref{thm_advbound}).

\begin{theorem*}[Informal] Let $\epsilon > 0$ be a small parameter, and rescale $1-\lambda \mapsto \epsilon (1-\lambda)$, $\sigma_{add} \mapsto \epsilon \sigma_{add}$, $\sigma_{mult} \mapsto \epsilon \sigma_{mult}$. Then, under additional reasonable assumptions, 
\begin{align}
    L_n^{\textit{NoisyMix} } &\geq \frac{1}{n} \sum_{i=1}^n \max_{\|\delta_i \|_2 \leq \epsilon_i^{mix}} l(p(x_i + \delta_i), y_i)  + \gamma \tilde{L}_{n}^{JSD} +  \epsilon^2 L_n^{reg} +   \epsilon^2  \phi(\epsilon),
\end{align}
for some data-dependent radii $\epsilon_i^{mix} > 0$, stability objective $\tilde{L}_n^{JSD}$, data-dependent regularizer $L_n^{reg}$, and  function $\phi$ such that $\lim_{\epsilon \to 0} \phi(\epsilon) = 0$. 
\end{theorem*}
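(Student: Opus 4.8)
The plan is to prove the lower bound by treating the two pieces $L_n^{NFM}$ and $\gamma L_n^{JSD}$ of $L_n^{\textit{NoisyMix}}$ separately, expanding each to second order in the scaling parameter $\epsilon$, and then repackaging the low-order terms into an adversarial loss and a stability objective. Throughout I would work in the parameter region $\Theta$ and exploit the clean algebraic form of the binary cross-entropy: writing $h(z)=\log(1+e^z)$ one has $l(p(x),y)=h(p(x))-y\,p(x)$, with $h$ convex, $h''=h'(1-h')\ge 0$, and $\nabla_x l(p(x_i),y_i)=(h'(p(x_i))-y_i)\nabla_x p(x_i)$. The sign of $h'(p(x_i))-y_i$ is pinned down by the defining inequality of $\Theta$ (it is negative on class-$1$ points and positive on class-$0$ points), and this is precisely what will later let a signed first-order term be compared against the unsigned gradient-norm coefficient of the adversarial loss.

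First, for the NFM term: after the rescaling $1-\lambda\mapsto\epsilon(1-\lambda)$, $\sigma_{add}\mapsto\epsilon\sigma_{add}$, $\sigma_{mult}\mapsto\epsilon\sigma_{mult}$, the mixed input satisfies $M_{\lambda,\vecc{\xi}}(x_i,x_j)=x_i+\epsilon\,\delta_{ij}+O(\epsilon^2)$ with $\delta_{ij}=(1-\lambda)(x_j-x_i)+\sigma_{mult}\,\xi^{mult}\odot x_i+\sigma_{add}\,\xi^{add}$, and the mixed label is $M_\lambda(y_i,y_j)=y_i+\epsilon(1-\lambda)(y_j-y_i)$. I would Taylor expand $a_{ij}=h(p(M_{\lambda,\vecc{\xi}}))-M_\lambda(y_i,y_j)\,p(M_{\lambda,\vecc{\xi}})$ to second order in $\epsilon$, take $\mathbb{E}_{\lambda\sim Beta}$ and $\mathbb{E}_{\vecc{\xi}\sim\mathcal{Q}}$, and average over $i,j$. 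Since $\mathcal{Q}$ is centred with finite second moments, the odd-in-$\xi$ contributions drop from the $O(\epsilon)$ term, leaving the clean loss $\frac1n\sum_i l(p(x_i),y_i)$ at order $\epsilon^0$, a signed gradient term at order $\epsilon$, and a curvature-plus-variance quadratic form (built from $h''$, the mixup covariance, and the second moments of $\xi^{add},\xi^{mult}$) at order $\epsilon^2$; the cubic and higher terms, controlled by an assumed bound on the third derivatives of $p$ and the third moments of $\mathcal{Q}$, produce a uniform remainder $\epsilon^2\phi(\epsilon)$ with $\phi(\epsilon)\to 0$.

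Next comes the \emph{crux}: converting the $O(1)+O(\epsilon)$ part into $\frac1n\sum_i\max_{\|\delta_i\|_2\le\epsilon_i^{mix}} l(p(x_i+\delta_i),y_i)$. Here I would Taylor expand the adversarial loss itself, $\max_{\|\delta\|_2\le r} l(p(x_i+\delta),y_i)=l(p(x_i),y_i)+r\,\|\nabla_x l(p(x_i),y_i)\|_2+O(r^2)$, whose first-order coefficient is $\|\nabla_x l\|_2=|h'(p(x_i))-y_i|\,\|\nabla_x p(x_i)\|_2$ by the identity above. I would then choose the data-dependent radius $\epsilon_i^{mix}$ so that this coefficient matches the order-$\epsilon$ NFM term, using convexity of $h$ to turn the Taylor equality into the desired $\ge$, and the sign information from $\Theta$ to guarantee that the signed NFM first-order term carries the same sign as $+\|\nabla_x l\|_2$ so the comparison runs in the correct direction. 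The residual genuine second-order curvature and noise-variance terms that are not absorbed by the $O(r^2)$ part of the adversarial expansion are then collected into $\epsilon^2 L_n^{reg}$.

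Finally, for the stability piece I would use $JS_\pi\ge 0$ together with the smoothness of $JS_\pi$ and of $p$: expanding $b_{ij}$ under the same rescaling, I would define $\tilde L_n^{JSD}$ to be its leading-order part and push the remainder into $\epsilon^2\phi(\epsilon)$, so that $\gamma L_n^{JSD}\ge\gamma\tilde L_n^{JSD}$ up to the stated order. Assembling the three expansions gives the claimed inequality. I expect the main obstacle to be the adversarial identification of the third paragraph, namely pinning $\epsilon_i^{mix}$ while showing that the signed first-order NFM term dominates the adversarial gradient-norm term uniformly over $\Theta$ (a Cauchy–Schwarz dot product naturally bounds in the wrong direction, so the label-mixing and sign structure must be used carefully), and simultaneously proving that the discarded higher-order contributions are genuinely $O(\epsilon^2)$ with a remainder $\phi$ uniform in the data, which is exactly where the regularity assumptions on $p$ and the moment assumptions on $\mathcal{Q}$ are essential.
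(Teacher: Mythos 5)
Your overall route is the same as the paper's: split $L_n^{\textit{NoisyMix}}$ into $L_n^{NFM}+\gamma L_n^{JSD}$, Taylor-expand each piece to second order under the rescaling, convert the zeroth- and first-order part of the NFM expansion into an adversarial loss over data-dependent balls, and keep the leading part of the JSD expansion as $\tilde L_n^{JSD}$ with its $\epsilon^2$ coefficient absorbed into $L_n^{reg}$. The paper does exactly this, quoting Theorem 2 of \cite{lim2021noisy} for the NFM half and its own Theorem \ref{thm_jsd} for the JSD half.

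The gap is the step you yourself flag as the main obstacle, and it genuinely needs an idea you have not supplied. After the expansion (with zero-mean noise), the first-order NFM term is $\epsilon\,\mathbb{E}[1-\lambda]\cdot\frac{1}{n}\sum_i(h'(p(x_i))-y_i)\,\nabla p(x_i)^T(\mathbb{E}_{x_r}[x_r]-x_i)$, and matching it against $\epsilon_i^{mix}\|\nabla_x l(p(x_i),y_i)\|_2=\epsilon_i^{mix}\,|h'(p(x_i))-y_i|\,\|\nabla p(x_i)\|_2$ cannot be done by sign and convexity considerations alone: as you note, Cauchy--Schwarz bounds $|\nabla p(x_i)^Tx_i|$ from \emph{above} by $\|\nabla p(x_i)\|_2\|x_i\|_2$, which is the wrong direction. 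The resolution (inherited from the Mixup analyses of \cite{lamb2019interpolated,zhang2020does,lim2021noisy}) is a specific package of structural assumptions: $\mathbb{E}_{x_r}[x_r]=\mathbb{E}_{x_r}[A(x_r)]=0$, the homogeneity property $p(x_i)=\nabla p(x_i)^Tx_i$ together with $\nabla^2 p(x_i)=0$ (satisfied by ReLU networks with max-pooling), and the norm lower bound $\|x_i\|_2\ge c_x\sqrt d$. These turn the first-order term into $\epsilon\,\mathbb{E}[1-\lambda]\cdot\frac{1}{n}\sum_i|h'(p(x_i))-y_i|\,|p(x_i)|$ (the sign of $p(x_i)$, not of $h'(p(x_i))-y_i$, is what $\theta\in\Theta$ controls), and the radius is then \emph{defined} as $\epsilon_i^{mix}=\epsilon\,\mathbb{E}[1-\lambda]\,c_x\sqrt d\,|\cos(\nabla p(x_i),x_i)|$, so that $\epsilon_i^{mix}\|\nabla p(x_i)\|_2\le\epsilon\,\mathbb{E}[1-\lambda]\,|\cos(\nabla p(x_i),x_i)|\,\|x_i\|_2\,\|\nabla p(x_i)\|_2=\epsilon\,\mathbb{E}[1-\lambda]\,|p(x_i)|$; Cauchy--Schwarz is used as an identity via the cosine, and the comparison now runs the right way. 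Without naming these assumptions and this choice of $\epsilon_i^{mix}$, the ``crux'' paragraph of your plan does not close. Two smaller points: the $\epsilon^2\tilde S_2$ coefficient coming from the JSD expansion is $O(1)$ and must be placed in $L_n^{reg}$ rather than in $\phi(\epsilon)$, or the requirement $\lim_{\epsilon\to0}\phi(\epsilon)=0$ fails; and the same assumptions make the first-order JSD term equal to $-\epsilon\,\mathbb{E}[1-\lambda]$ times the clean JSD, which is where the prefactor $1-\epsilon\,\mathbb{E}[1-\lambda]$ in the paper's $\tilde L_n^{JSD}$ comes from.
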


This theorem implies that  minimizing the \textit{NoisyMix} loss results in a small {\it regularized} adversarial loss and a  stable model. 
Therefore, training with \textit{NoisyMix} not only enhances robustness with respect to input perturbations, but it also imposes additional smoothness. Lastly, we remark that one could generalize the JSD (designed for only two distributions) to provide a similarity measure for any finite number of distributions. This leads to the following definition \cite{lin1991divergence}:
\begin{align*}
    JS_\pi(p_1, \dots, p_K) 
    &:= H\left(\sum_{k=1}^K \pi_k p_k \right) - \sum_{k=1}^K \pi_k H(p_k) 
    = \sum_{k=1}^K \pi_k KL\bigg(p_k \bigg\|  \sum_{j=1}^K \pi_j p_j \bigg),
\end{align*}
where $p_1, \dots, p_K$ are $K$ probability distributions with weights $\pi := (\pi_1, \dots, \pi_K)$.  We note that such a measure, with $K:=3$ and the $\pi_i := 1/3$, will be used in our experiments and was also used in \cite{hendrycks2020augmix} for obtaining optimal results.

\section{Experimental Results}\label{sec:results}

\subsection{Experiment Details}

\paragraph{Datasets.} 
We center our experiments around several recently introduced datasets to benchmark the robustness of neural networks to common corruptions and perturbations. 

The ImageNet-C, CIFAR-10-C, and CIFAR-100-C  datasets \cite{hendrycks2018benchmarking} provide various corrupted versions of the original ImageNet~\cite{deng2009imagenet} and CIFAR~\cite{krizhevsky2009learning} test sets. Specifically, each dataset emulates noise, blur, weather, and digital distortions. In total, there are 15 different perturbation and corruption types, and for each type, there are sets with 5 severity levels which allows us to study the performance with respect to increasing data shifts. Figure~\ref{fig:cimages} in the App. illustrates the different corruption types. The average accuracy across all corruption types and severity levels provides a score for robustness. 

We also consider the ImageNet-R dataset~\cite{hendrycks2021many}, which is designed to measure the robustness of models to various abstract visual renditions (e.g., graffiti, origami, cartoons). 
This dataset provides 30,000 image renditions for a subset of 200 ImageNet object classes. 
The collected renditions have textures and local image statistics that are distinct from the standard ImageNet examples. 
This dataset provides a test for robustness to naturally occurring domain shifts, and it is complementary to ImageNet-C, which is used to test robustness to synthetic domain shifts.

In addition, we consider the ImageNet-P dataset~\cite{hendrycks2018benchmarking}, which provides 10 different perturbed sequences for each ImageNet validation image. 
Each sequence contains 30 or more frames, where each subsequent frame is a perturbed version of the previous frame. 
The applied perturbations are relatively subtle, ensuring that the frames do not go too far out-of-distribution. 
This dataset is specifically designed to evaluate the prediction stability of a model $f$, using the flip probability as a metric to evaluate stability.
Specifically, the flip probability is computed as
$ FB_p=\frac{1}{m(n-1)} \sum_{i=1}^{m}\sum_{j=2}^{n}  I(f(x_j^{(i)}) \ne f(x_{j-1}^{(i)})  ),$
for given $m$ sequences $\{(x_i^{(i)}, \dots, x_n^{(i)})\}_{i=1}^m$ of length $n$ for a corruption type $p$.
Here $I$ denotes an indicator function. 
Unstable models are characterized by a high flip probability, i.e., the prediction behavior across consecutive frames is erratic, whereas consistent predictions indicate stable statistical predictions. 
For sequences that are constructed with noise perturbations (i.e., white, shot and impulse noise), the following modified flip probability is computed:
$ FB_p=\frac{1}{m(n-1)} \sum_{i=1}^{m}\sum_{j=2}^{n}  I(f(x_j^{(i)}) \ne f(x_{i}^{(i)})),$
which accounts for the fact that these sequences are not temporally related. 
We obtain a flip rate by dividing the flip probability of a given model with the  flip probability of AlexNet~\cite{hendrycks2018benchmarking}.  
Finally, the flip rate is averaged across all corruption types  to provide an overall score, denoted as mean flip rate (mFP).

\paragraph{Baselines and Training Details.} 
We consider several data augmentation schemes as baselines, including style transfer~\cite{geirhos2018imagenet}, AutoAugment~\cite{cubuk2019autoaugment}, Mixup~\cite{zhang2018mixup}, Manifold Mixup~\cite{verma2019manifold}, CutMix~\cite{yun2019cutmix}, PuzzleMix~\cite{kim2020puzzle} and AugMix~\cite{hendrycks2020augmix}. 
In our ImageNet experiments, we consider the standard ResNet-50~\cite{he2016deep} architecture as a backbone, trained on ImageNet-1k. We use publicly available models, and for a fair comparison we do not consider models that have been pretrained on larger datasets or that have been trained with any extra data to achieve better performance.
In our CIFAR-10 and CIFAR-100 experiments, we use preactivated ResNet-18~\cite{he2016identity} and Wide-ResNet-28x2~\cite{zagoruyko2016wide}. We train all models from scratch for 200 epochs, using the same basic tuning parameters. For the augmentation schemes, we use the prescribed parameters in the corresponding papers.

\begin{table*}[!t]
	\caption{Clean test accuracy of ResNet-50 models on ImageNet, and average robust accuracy on ImageNet-C and ImageNet-R (higher values are better). In addition, we show the mean flip rate for ImageNet-P (lower values are better) and the robustness to adversarial examples constructed with FGSM (higher values are better). The values in parenthesis are the average robust accuracy for ImageNet-C and mean flip probability for ImageNet-P excluding noise perturbations.}
	\label{tab:results_imagenet}
	\centering
	\scalebox{0.9}{
		\begin{tabular}{lccccccccc}
			\toprule
			& ImageNet ($\uparrow$\%) & ImageNet-C ($\uparrow$\%) & ImageNet-R ($\uparrow$\%) & ImageNet-P ($\downarrow$\%) \\			
			\midrule 
			Baseline~\cite{he2016deep}								& 76.1 			& 39.2 (42.3) 		& 36.2		& 58.0 (57.8) \\
			Adversarial Trained~\cite{NEURIPS2020_24357dd0} & 63.9 			& 32.1 (35.4)		& 38.9		& 33.3 (33.4)  \\
			Stylized ImageNet~\cite{geirhos2018imagenet} 	& 74.9 			& 45.2 (46.6)		& 41.5		& 54.4 (55.2) \\
			AutoAugment~\cite{cubuk2019autoaugment}			& 77.6 			&45.7 (47.3)	& 39.0		& 56.5 (57.7) 	\\
			
			Mixup~\cite{zhang2018mixup} 					& 77.5			& 46.2 (48.4)		& 39.6		& 56.4 (58.7) \\
			Manifold Mixup~\cite{verma2019manifold} 		& 76.7 			& 43.9 (46.5)		& 39.7		& 56.0 (58.2) \\
			CutMix~\cite{yun2019cutmix}						& 78.6			& 41.0 (43.1)		& 34.8		& 58.6 (59.9)  \\
			Puzzle Mix~\cite{kim2020puzzle}					& \textbf{78.7} & 44.6 (46.4)	& 39.5		& 55.5	(57.0)	\\
			AugMix~\cite{hendrycks2020augmix} 				& 77.5			&   48.3 (50.5) 	& 41.0			& 37.6 (37.2) \\

			\textit{NoisyMix}  (ours) 									& 77.6 & \textbf{52.3 (52.4)}  & \textbf{45.7}  & \textbf{28.5 (29.7)} \\			
			
			\bottomrule
	\end{tabular}}
\end{table*}

\subsection{ImageNet Results}

Table~\ref{tab:results_imagenet} summarizes the results for ImageNet models, trained with different data augmentation schemes. Our \textit{NoisyMix} scheme leads to a model that is substantially more robust, and it also improves the test accuracy on clean data as compared to the baseline model. 
%
In contrast, adversarial training and style transfer reduce the in-distribution test accuracy.

\begin{wrapfigure}{r}{0.5\textwidth}
\vspace{-0.2cm}
	\centering
	\includegraphics[width=0.5\textwidth]{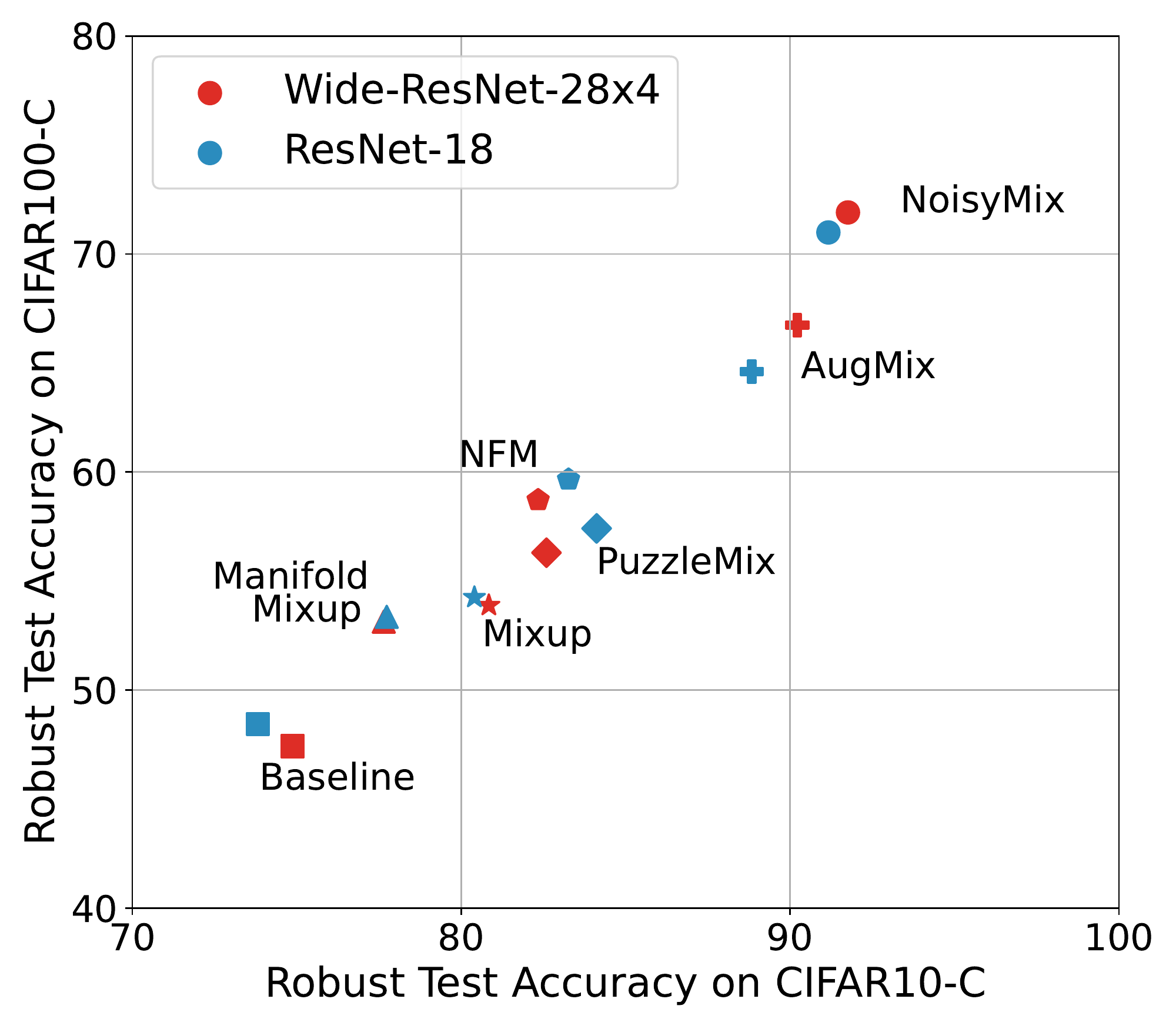}
	\vspace{-0.4cm}
	\caption{Robust accuracy on CIFAR-10-C and CIFAR-100-C for models trained with different data augmentation schemes. \textit{NoisyMix} shows the best performance.}
	\label{fig:robust10vs100}
\end{wrapfigure}
On ImageNet-C, \textit{NoisyMix} gains about $13\%$ compared to the baseline model and about $4\%$ compared to AugMix. Since \textit{NoisyMix} uses white noise during training, we also show the average robust accuracy in parentheses, excluding any noise perturbations (i.e., excluding white, shot and impulse noise). Table~\ref{tab:results_imagenet_c} in the App. provides  results for each perturbation type.

The advantage of \textit{NoisyMix} is also pronounced when evaluated on real-world examples as provided by the ImageNet-R dataset. This shows that \textit{NoisyMix}  not only improves the robustness with respect to synthetic distribution shifts but also to natural distribution shifts. It can also be seen that the stylized ImageNet model performs well on this task.
We also use the ImageNet-R dataset to show that \textit{NoisyMix} yields well-calibrated predictions. Table~\ref{tab:results_imagenet_r_calibration} in the App. shows the RMS calibration error and the area under the response rate accuracy curve (AURRA).

\textit{NoisyMix} also achieves state-of-the-art result on ImageNet-P, reducing the mFP from 58\% down to 28.5\%. Again, we also show the mFP computed for a subset that excludes noise perturbations (i.e., white and shot noise). Table~\ref{tab:results_imagenet_c} in the App. shows detailed results for all perturbation types.

\subsection{CIFAR-10 and CIFAR-100 Results}

We train ResNet-18 and Wide-ResNet-28x2 models (with 5 seeds) on both CIFAR-10 and CIFAR-100, using different data augmentation schemes. 
Figure~\ref{fig:robust10vs100} shows the robust accuracy for the models evaluated on CIFAR-10-C and CIFAR-100-C. 
Here AugMix and \textit{NoisyMix} show the best performance, and both schemes benefit from the use of wide architectures.
In contrast, wide architectures hurt robustness when models are trained with PuzzleMix, and NFM. 
These results show that Wide-ResNets are not always better than ResNets, which is in agreement with results by \cite{bello2021revisiting}. 

Table~\ref{tab:results_cifar10} and~\ref{tab:results_cifar100} provide detailed results, showing the average, minimum, and maximum test accuracy for the various test settings. Similar to the previous ImageNet results, PuzzleMix achieves the best test accuracy on clean test sets. Further, we can see that the advantage of \textit{NoisyMix} is pronounced on CIFAR-100-C, where we improve the robust accuracy by more than $5\%$ as compared to AugMix, and this is independent of the specific architecture. 
\begin{table*}[!t]
	\caption{Results for ResNet-18 models trained on CIFAR-10/100 and evaluated on CIFAR-10/100-C. We train all models with 5 different seeds and show the mean and (minimum/maximum) test accuracy. (Note, the results for AugMax, indicated by a $^*$, are adapted from~\cite{wang2021augmax}.) }
	\label{tab:results_cifar10}
	\centering
	\scalebox{0.9}{
		\begin{tabular}{lcc | ccccccc}
			\toprule
			& CIFAR-10 ($\uparrow$\%) & CIFAR-10-C ($\uparrow$\%) & CIFAR-100 ($\uparrow$\%) & CIFAR-100-C ($\uparrow$\%) \\			
			\midrule 
			Baseline~\cite{he2016identity} 			& 95.3 (95.2 / 95.4)	     & 73.8 (73.5 / 74.2)  	& 77.7 (77.5 / 78.1) & 48.4 (48.0 / 48.8)\\
			Mixup~\cite{zhang2018mixup} 			& 95.8 (95.3 / 96.1)	     & 80.4 (79.7 / 80.9) 	& 79.7 (79.5 / 79.9) & 54.2 (53.9 / 54.8) \\
			Manifold Mixup~\cite{verma2019manifold} & 95.9 (95.7 / 96.1)		  & 77.7 (77.3 / 78.2)	& \textbf{80.3} (\textbf{79.8} / \textbf{80.7}) & 53.4 (52.9 / 53.6) \\
			CutMix~\cite{yun2019cutmix} 			& \textbf{96.5} (\textbf{96.4} / \textbf{96.6}) & 73.0 (71.8 / 73.9)	& 79.3 (78.8 / 80.0) & 47.9 (47.6 / 48.3)\\
			Puzzle Mix~\cite{kim2020puzzle} 			& \textbf{96.5} (96.3 / \textbf{96.6}) & 84.1 (83.7 / 84.3)	& 79.4 (79.1 / 79.6) & 57.4 (56.9 / 57.8)\\
			NFM~\cite{lim2021noisy} 				& 95.4 (95.2 / 95.6)		  & 83.3 (82.7 / 83.9)	& 79.4 (78.9 / 79.8) & 59.7 (59.2 / 60.2)\\
			AugMix~\cite{hendrycks2020augmix} 		& 95.5 (95.0 / 95.8)		& 88.8 (88.1 / 89.3) 	& 77.0 (76.6 / 77.3) & 64.6 (64.4 / 64.8) \\
			AugMax$^*$~\cite{wang2021augmax} 		& -		& 90.36 	& - & 65.75  \\
			\textit{NoisyMix}  (ours) 						& 95.3 (95.2 / 95.3)	       & \textbf{91.2} (\textbf{91.1} / \textbf{91.3})	& 79.7 (79.6 / 79.9) & \textbf{71.0} (\textbf{70.3} / \textbf{71.3})\\			
			\bottomrule
	\end{tabular}}
\end{table*}

\begin{table*}[!t]
	\caption{Results for Wide-ResNet-28x2 trained on CIFAR-10/100 and evaluated on CIFAR-10/100-C. We train all models with 5 different seeds and show the mean and (minimum/maximum) test accuracy.}
	\label{tab:results_cifar100}
	\centering
	\scalebox{0.9}{
		\begin{tabular}{lccccccccc}
			\toprule
			& CIFAR-10 ($\uparrow$\%) & CIFAR-10-C ($\uparrow$\%) & CIFAR-100 ($\uparrow$\%) & CIFAR-100-C ($\uparrow$\%) \\			
			\midrule 
			Baseline~\cite{zagoruyko2016wide} 		& 96.0 (95.9 / 96.1)				& 74.9 (74.5 / 75.4)		& 79.0 (78.8 / 79.1)			& 47.4 (47.0 / 48.0)\\
			Mixup~\cite{zhang2018mixup} 			& 96.7 (96.6 / 96.8)				& 80.8 (80.7 / 81.0)		& 80.9 (80.8 / 81.0)			& 53.9 (53.1 / 54.3)\\
			Manifold Mixup~\cite{verma2019manifold} & 96.6 (96.5 / 96.7)				& 77.7 (76.3 / 78.4)		& \textbf{81.8} (\textbf{81.5} / \textbf{82.0}) 			& 53.2 (52.4 / 53.7) \\
			
			CutMix~\cite{yun2019cutmix} 			& 96.7 (96.6 / 96.8)				& 71.6 (71.1 / 71.9) 		& 80.3 (80.0 / 80.6)			& 48.5 (48.1 / 48.9)\\
			
			Puzzle Mix~\cite{kim2020puzzle} 			& \textbf{97.0} (\textbf{96.9} / \textbf{97.2})  		& 82.6 (82.1 / 83.0)		& 81.2 (81.0 / 81.6)  & 56.3 (55.8 / 56.7) \\
			NFM~\cite{lim2021noisy} 				& 96.2 (96.1 / 96.3)				& 82.3 (81.8 / 82.8)			& 80.2 (79.6 / 80.6)  & 58.7 (58.2 / 59.2)\\
			
			AugMix~\cite{hendrycks2020augmix} 		& 96.5 (96.4 / 96.6)				& 90.2 (89.7 / 90.7)			& 80.5 (79.9 / 80.8) & 66.7 (65.9 / 67.2) \\
			\textit{NoisyMix}  (ours) 				& 96.3 (96.2 / 96.4)			& \textbf{91.8} (\textbf{91.2} / \textbf{92.7}) 	& 81.3 (81.0 / 81.6)	& \textbf{71.9} (\textbf{71.7} / \textbf{72.3}) \\
			\bottomrule
	\end{tabular}}
\end{table*} 

\subsection{Ablation Study}

At a high-level, we can decompose \textit{NoisyMix} into 3 building blocks: (i) Noisy Feature Mixup, (ii) stability training, and (iii) data augmentations. Further, Noisy Feature Mixup can be decomposed into feature mixup and noise injections. We provide a detailed ablation study to show that all components are required to achieve state-of-the-art performance. 

We train models on CIFAR-100 and evaluate the in-domain test accuracy and robust accuracy on CIFAR-100-C. Table~\ref{tab:ablation} shows the results, and it can be seen that data augmentations have a significant impact on robustness, i.e., robust accuracy is improved by 17\%. The combination of feature mixup and noise injections helps to improve the in-domain test accuracy, while the robust accuracy is improved by about 11\%. The combination of both considerably helps to improve in-domain test accuracy and robustness. The performance is further improved when stability training is used in addition, yielding our NosiyMix scheme. Indeed, \textit{NoisyMix} (i.e., the favorable combination of all components) is outperforming all other variants.

\subsection{Does Robustness Transfer?}

Motivated by \cite{shafahi2019adversarially}, who showed that adversarial robustness transfers, we study whether robustness to common corruptions transfers from a source model to a target task. 
To do so, we fine-tune ImageNet models on CIFAR-10 and CIFAR-100, and then use the corrupted test datasets to evaluate robustness. 

Table~\ref{tab:results_transfer} shows that target models inherit some robustness from the source models. More robust ImageNet models, i.e., models trained with \textit{NoisyMix} or Augmix, lead to more robust target models. In particular, robustness to noise perturbations is transferred. For example, the parent model trained with \textit{NoisyMix} yields target models that are about 12\% more robust to noise perturbations on CIFAR-10-C and about 6\% more robust on CIFAR-100-C. Moreover, a robust parent model can help to improve the in-domain accuracy on the target task, which is in alignment with the results by \cite{utrera2020adversarially,NEURIPS2020_24357dd0}.

\begin{table*}[!t]
	\caption{Ablation study using a Wide-ResNet-28x2 trained on CIFAR-100. The combination of feature mixing and noise injections on top of a stability training scheme on an augmented data set boost both clean and robust accuracy. }
	\label{tab:ablation}
	\centering
	\scalebox{0.9}{
		\begin{tabular}{cccc c c  c  c c  c  c  c  c c c c c c c}
			\toprule
			\rotatebox[origin=c]{55}{Augmentations} & \rotatebox[origin=c]{55}{Feature Mixing} & \rotatebox[origin=c]{55}{Noise Injections} & \rotatebox[origin=c]{55}{JSD Loss} & \rotatebox[origin=c]{55}{CIFAR-100 ($\uparrow$\%)} & \rotatebox[origin=c]{55}{CIFAR-100-C ($\uparrow$\%)} &  \rotatebox[origin=c]{55}{Robustness Gain (\%)} \\			
			\midrule 
			\xmark & \xmark  & \xmark & \xmark & 79.0 & 47.4 & - \\
			\midrule
			\xmark & \xmark  & \cmark & \xmark & 79.1 & 49.5  & +2.1 \\	
			\xmark & \cmark  & \xmark & \xmark & 81.8 & 53.2  & +5.8 \\					
			\xmark & \cmark  & \cmark & \xmark & 80.2 & 58.7  & +11.3 \\
			\cmark & \xmark  & \xmark & \xmark & 79.1 & 64.0 & +16.6\\
			\cmark & \xmark  & \cmark & \cmark & 80.9 & 66.7  & +19.3 \\
			\cmark & \xmark  & \xmark & \cmark & 80.5 & 66.9 & +19.5\\
			\cmark & \cmark  & \cmark & \xmark & 80.4 & 68.6 & +21.2\\
			\cmark & \cmark  & \xmark & \cmark & \textbf{82.1} & 69.3 & +21.9\\
			\midrule
			\cmark & \cmark  & \cmark & \cmark & 81.3 & \textbf{71.9} & +\textbf{24.4} \\
			\bottomrule
	\end{tabular}}
\end{table*} 

\begin{table*}[!t]
	\caption{Transfer learning results for ResNet-50 models trained on ImageNet and transfered to CIFAR-10/100. Then robustness is evaluated on CIFAR-10-C/100-C. Specifically, we show the average robust accuracy across all common corruptions / across noise corruptions / across whether, blur and digital corruptions. (Higher values are better.)}
	\label{tab:results_transfer}
	\centering
	\scalebox{0.9}{
		\begin{tabular}{lcc | ccccccc}
			\toprule
			Source Model & CIFAR-10 ($\uparrow$\%) & CIFAR-10-C ($\uparrow$\%) & CIFAR-100 ($\uparrow$\%) & CIFAR-100-C ($\uparrow$\%) \\			
			\midrule 
			Baseline~\cite{he2016deep} 			& 97.0 			& 77.1 / 48.8 / 86.5  			& 84.3 			& 57.1 / 28.0 / 66.8 \\
			
			PuzzleMix~\cite{kim2020puzzle}		& 97.1 			& 77.3 / 47.7 / 87.2 			& 84.5 & 57.2 / 26.1 / 67.6  \\
			
			Mixup~\cite{zhang2018mixup} & 97.1 &  78.6 / 51.7 /	87.6 & 84.5 & 57.3 / 26.4 / 67.6 	\\
			
			AutoAugment~\cite{cubuk2019autoaugment} & 97.2 & 78.7 / 54.6 /	86.7 & 84.9 & 57.6 / 27.3 / 67.6	\\
			
			AugMix~\cite{hendrycks2020augmix} & \textbf{97.7}			& 79.9 / 53.9 / 88.6			& \textbf{86.5}	& 58.8 / 24.2 / 66.8 \\ 
			\textit{NoisyMix}  (ours) & \textbf{97.7} 	& \textbf{81.8 / 60.3 / 88.9}  	& 85.9 			& \textbf{60.3 / 34.5 / 68.8}\\			
			\bottomrule
	\end{tabular}}
\end{table*} 

\section{Conclusion}

Desirable properties of deep classifiers for computer vision are (i) in-domain accuracy, (ii) robustness to domain shifts, and (iii) well-calibrated estimation of class membership probabilities.
Motivated by the challenge to achieve these, we propose \textit{NoisyMix}, a novel and theoretically justified training scheme. Our empirical results demonstrate the advantage, i.e., improvement of (i), (ii), and (iii), when compared to standard training and other recently proposed data augmentation methods. These findings are supported by theoretical results that show that \textit{NoisyMix} can improve model robustness. 

A limitation of \textit{NoisyMix} is that it is tailored towards computer vision tasks and not directly applicable to natural language processing tasks. %
Future work will explore how \textit{NoisyMix} can be extended to a broader range of tasks. In particular, we anticipate that \textit{NoisyMix} will be effective in improving the robustness of models used in Scientific Machine Learning (e.g., models used for climate predictions).
Since this paper studies a new training scheme to improve model robustness there are no potential negative societal impacts of our work. 

\section*{Acknowledgements}
N. B. Erichson and M. W. Mahoney would like to acknowledge IARPA (contract W911NF20C0035), NSF, and ONR for providing partial support of this work. S. H. Lim would like to acknowledge the WINQ Fellowship and the Knut and Alice Wallenberg Foundation for providing support of this work. Our conclusions do not necessarily reflect the position or the policy of our sponsors, and no official endorsement should be inferred. We are also grateful for the generous support from Amazon AWS.

\bibliographystyle{plain}

\bibliography{noisymix}

\clearpage
\appendix

\newpage
\appendix
\onecolumn

\begin{center}
    {\Large \bf  Appendix for ``\textit{NoisyMix}"}
\end{center}






In this {\bf Appendix}, we provide additional materials to support our results in the main paper. In particular, we provide  mathematical analysis of \textit{NoisyMix}  to shed light on its regularizing and robustness properties, as well as additional empirical results and the details.

\noindent {\bf Organizational Details.} This {\bf Appendix} is organized as follows.

\begin{itemize}
    \item In Section \ref{sm_demon}, we illustrate the regularizing effects of stability training with the JSD, Manifold Mixup, and noise injection on a toy dataset. 
    \item In Section \ref{sm_secA}, we study \textit{NoisyMix}  via the lens of implicit regularization. Our contributions there are Theorem 1 and Theorem 2, which show that minimizing the \textit{NoisyMix}  loss function is approximately equivalent to minimizing a sum of the original loss and data-dependent regularizers,  amplifying the regularizing effects of AugMix  according to the mixing and noise levels.  
    \item In Section \ref{sm_secD}, we show that \textit{NoisyMix}  training can improve model robustness when compared to standard training. Our main contribution there is Theorem 3, which shows that, under appropriate assumptions, \textit{NoisyMix}  training approximately minimizes an upper bound on the sum of an adversarial loss, a stability objective and data-dependent regularizers.
\item In Section \ref{sm_secG}, we provide additional experimental results and their details.
\end{itemize}

We recall the notation that we use in the main paper as well as this Appendix. 

\noindent {\bf Notation.}
$I$ denotes identity matrix, $[K] := \{1,\dots, K\}$, the superscript $^T$ denotes transposition, $\circ$ denotes composition, $\odot$ denotes Hadamard product, $\mathds{1}$ denotes the vector with all components equal one. For a vector $v$, $v^i$ denotes its $i$th component and $\|v\|_p$ denotes its $l_p$ norm for $p > 0$. $M_\lambda(a,b) := \lambda a + (1-\lambda) b$, for  random variables $a, b, \lambda$. For $\alpha, \beta > 0$,  $\tilde{\mathcal{D}}_{\lambda}:= \frac{\alpha}{\alpha + \beta} Beta(\alpha + 1, \beta) + \frac{\beta}{\alpha + \beta} Beta(\beta + 1, \alpha)$ is a uniform mixture of two Beta distributions. For the two vectors $a,b$, $\cos(a,b) := \langle a, b \rangle/\|a\|_2\|b\|_2$ denotes their cosine similarity.

To simplify mathematical analysis and enable clearer intuition, we shall restrict to the special case when the mixing is done only at the input level.  
Moreover, we consider the setting described in the main paper for the case of $K=2$ and $\pi_0 = \pi_1 = 1/2$ for our  analysis. The results for the general case can be derived analogously using our techniques.


\section{Illustration of the Regularizing Effects of Stability Training, Manifold Mixup and Noise Injection on a Toy Dataset }\label{sm_demon}

We consider a binary classification task for the noise corrupted 2D dataset whose points are separated by a concave polygon-shaped buffer band. Inner and outer points that disjoint the band directly correspond to two label classes, while those falling onto the band are randomly assigned to one of the classes. We generate 500 samples by setting the scale factor to be 0.5 and adding Gaussian noise with zero mean and standard deviation of 0.2 to the shape. Fig.~\ref{fig:toy_data} shows the data points. 


\begin{figure}[!t]
	\centering
	\begin{subfigure}[t]{0.35\textwidth}
		\centering
		\begin{overpic}[width=1\textwidth]{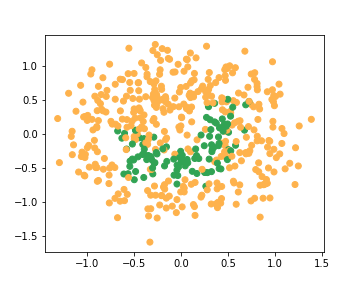}
		\end{overpic}
		\caption{All data points.}
	\end{subfigure}\hspace{+0.5cm}
	~
	\begin{subfigure}[t]{0.35\textwidth}
		\centering
		\begin{overpic}[width=1\textwidth]{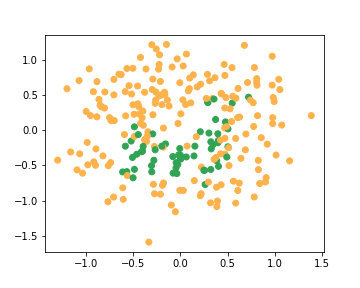} 
		\end{overpic}			
		\caption{Data points used for testing.}
	\end{subfigure}	
	
	\vspace{+0.2cm}
	\caption{The dataset in $\mathbb{R}^2$ that we use for binary classification.}
	\label{fig:toy_data}
\end{figure}

We train a fully connected feedforward shallow neural network (with $32$ neurons) with the ReLU activation function on these data, using 250 points for training and 250 for testing. All models are trained with Adam, learning rate of $0.1$, and batch size of $50$ for 100 epochs. The seed is fixed across all experiments.

We  focus on the effects of stability training (with the JSD), Manifold Mixup, noise injection, and their combinations on the test performance. Note that we do not explore AugMix's data augmentation here, since the operations used there are not meaningful for the toy dataset here.

Fig.~\ref{fig:toy_example} illustrates how different training schemes affect the decision boundaries of the neural network classifier. 
First, we can see that \textit{NoisyMix} (which combines feature mixup, noise injections, and stability training) is most effective at smoothing the decision boundary of the trained classifiers, imposing the strongest smoothness on the dataset. Here it also yields the best test accuracy among all considered training schemes.
%
%
We can also see that stability training helps to boost the effectiveness of using Manifold Mixup.
In contrast, NFM without stability training performs better than the baseline and Manifold Mixup, but not as well as \textit{NoisyMix} and Manifold Mixup with stability training.




\begin{figure}[!t]
	\centering
	\footnotesize
	\stackunder[5pt]{\includegraphics[width=0.3\textwidth]{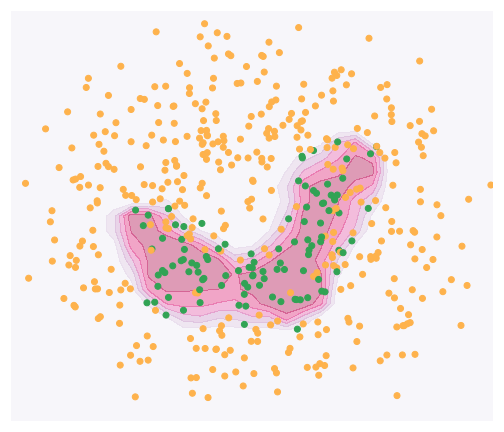}}{Baseline (86.8\%).}
	%
	\stackunder[5pt]{\includegraphics[width=0.3\textwidth]{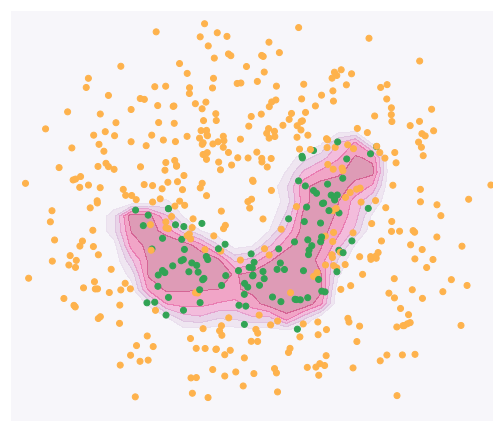}}{Noise injection (86.8\%).} 
	%
	\stackunder[5pt]{\includegraphics[width=0.3\textwidth]{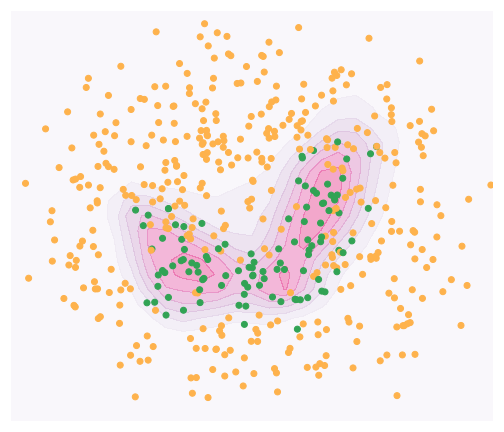}}{Manifold Mixup (87.4\%).}
	\stackunder[5pt]{\includegraphics[width=0.3\textwidth]{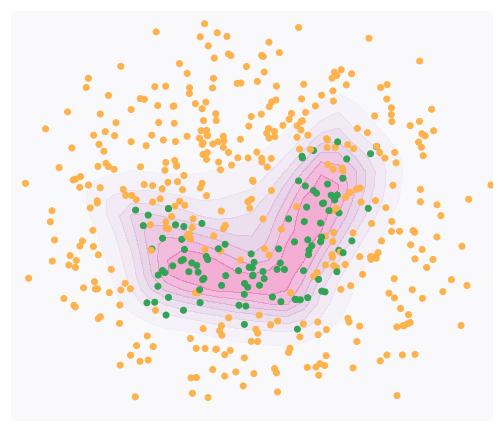}}{NFM (87.6\%).} 
	\stackunder[5pt]{\includegraphics[width=0.3\textwidth]{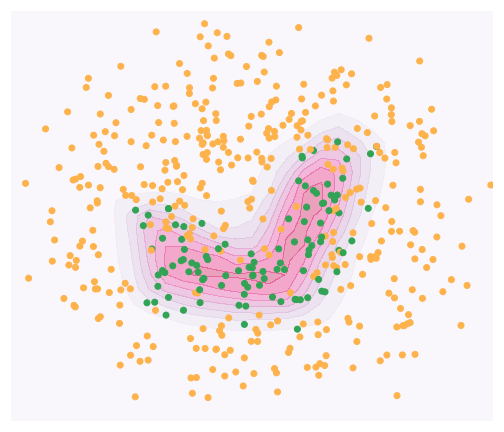}}{Manifold Mixup +  JSD  (88.0 \%).}
	\stackunder[5pt]{\includegraphics[width=0.3\textwidth]{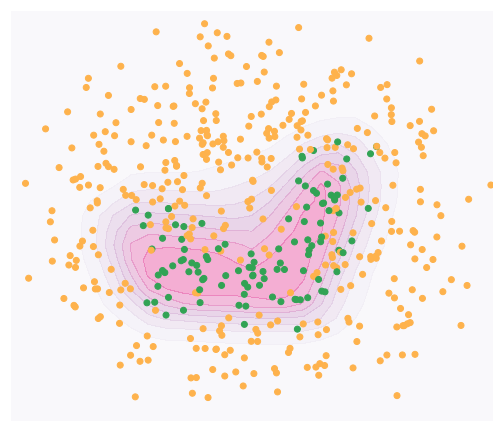}}{\textit{NoisyMix} (88.8\%).} 
	
	\caption{Decision boundaries and test accuracy (in parenthesis) for different training schemes on a 2D binary classification problem. {\it NoisyMix} is most effective at smoothing the decision boundary, and it also yields the best test accuracy among all considered training schemes.}
	\label{fig:toy_example}
\end{figure}

\section{Implicit Regularization of NoisyMix} \label{sm_secA}

In this section, we are going to identify the implicit regularization effects of \textit{NoisyMix}  by analyzing the NFM loss and the stability loss.

We emphasize that while the analysis for the NFM loss  can be adapted from that in \cite{lim2021noisy} in a straightforward manner, the analysis and results for the stability loss (see Theorem \ref{thm_jsd}) are novel and needs to be derived independently. Importantly, Theorem \ref{thm_jsd} is a crucial ingredient for the proof of our main theoretical result, Theorem \ref{thm_advbound} (stated informally in the main paper). The analysis for the stability loss is rather tedious, and we are going to provide the details of the derivation  for completeness (see Subsection \ref{app_subsec:stability}). 
 
\subsection{Implicit Regularization for the NFM Loss}

Recall that \begin{equation}
    L^{NFM} = \mathbb{E}_{(x, y), (x', y') \sim \mathcal{D} } 
     \mathbb{E}_{\lambda 
    \sim Beta(\alpha, \beta)}  \mathbb{E}_{\vecc{\xi} \sim \mathcal{Q}} l(p(M_{\lambda, \vecc{
\xi}} (x, x')), M_{\lambda}(y,y')).
\end{equation}
We  consider  loss functions of the form $l(p(x),y) := h(p(x)) - y p(x)$, which includes standard choices such as the logistic loss and the cross-entropy loss. Denote $L_n^{std} := \frac{1}{n} \sum_{i=1}^n l(p(x_i),y_i)$ and let  $\mathcal{D}_x$ be the empirical distribution of  training samples $\{x_i\}_{i \in [n]}$.  We shall show that NFM exhibits a natural form of implicit regularization, i.e., regularization imposed
implicitly by the stochastic learning strategy or approximation algorithm, without explicitly modifying the loss.

Recall from the main paper that the NFM loss function to be minimized is
\begin{equation}
    L_n^{NFM} = \frac{1}{n^2} \sum_{i=1}^n \sum_{j=1}^n
      \mathbb{E}_{\lambda 
    \sim Beta(\alpha, \beta)}  \mathbb{E}_{\vecc{\xi} \sim \mathcal{Q}} l(p(M_{\lambda, \vecc{
\xi}} (x_i, x_j)), M_{\lambda}(y_i,y_j)), \label{emp_NFMloss}
\end{equation}
where $l$ is a loss function, $\vecc{\xi} := (\xi^{add}, \xi^{mult})$ are drawn from some probability distribution $\mathcal{Q}$ with finite first two moments (with zero mean), and
\begin{align}
    M_{\lambda, \vecc{
\xi}} (x, x') &:= (\mathds{1}+\sigma_{mult} \xi^{mult}) \odot M_\lambda(x, x') + \sigma_{add} \xi^{add}.
\end{align} 
We emphasize that this loss is computed using only the clean data. 

Let $\epsilon > 0$ be a small parameter. In the sequel, we assume that  $l$ is a loss function of the form $l(p(x),y) = h(p(x)) - y p(x)$, and rescale $1-\lambda \mapsto \epsilon (1-\lambda)$, $\sigma_{add} \mapsto \epsilon \sigma_{add}$, $\sigma_{mult} \mapsto \epsilon \sigma_{mult}$.
We recall Lemma 2 in \cite{lim2021noisy} (adapted to our setting), which will be useful later, in the following.

\begin{lem} \label{sm_aux}
The NFM  loss  (\ref
{emp_NFMloss}) can be equivalently written as
\begin{equation} \label{eq_12}
    L^{NFM}_n = \frac{1}{n} \sum_{i=1}^n  \mathbb{E}_{\lambda 
\sim \tilde{\mathcal{D}}_{\lambda}  }  \mathbb{E}_{x_r 
\sim \mathcal{D}_x } \mathbb{E}_{\vecc{\xi} \sim \mathcal{Q}} [h(p(x_i + \epsilon e_i^{NFM})) - y_i p(x_i + \epsilon e_i^{NFM})],
\end{equation}
with
\begin{equation} \label{sm_eNFMk}
    e^{NFM}_i =  (\mathds{1}+\epsilon \sigma_{mult} \xi^{mult}) 
    \odot e_i^{mixup} +  e_i^{noise}.
\end{equation}
Here $e_i^{mixup} = (1-\lambda)(x_r - x_i)$ and $e_i^{noise} =  \sigma_{mult} \xi^{mult} \odot x_i + \sigma_{add} \xi^{add}$, with $x_i, x_r \in \mathcal{X}$ and $\lambda \sim Beta(\alpha, \beta)$.
\end{lem}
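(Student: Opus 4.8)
The plan is to exploit two structural features of the loss: the affine dependence of $l$ on its label argument, and a symmetrization over the pair index $(i,j)$ that converts the double empirical average into a single anchored one. Writing $l(p(x),y) = h(p(x)) - y\,p(x)$, I would first split the summand of \eqref{emp_NFMloss} into an ``entropy part'' $h(p(M_{\lambda,\vecc{\xi}}(x_i,x_j)))$ and a ``label part'' $-M_\lambda(y_i,y_j)\,p(M_{\lambda,\vecc{\xi}}(x_i,x_j))$, and treat the two separately, recombining them only at the end, since they share the same perturbed argument and the same driving randomness.

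For the label part I would use $M_\lambda(y_i,y_j) = \lambda y_i + (1-\lambda)y_j$ and symmetrize: in the $(1-\lambda)y_j$ contribution, swap $i\leftrightarrow j$ and substitute $\lambda\mapsto 1-\lambda$, which sends $Beta(\alpha,\beta)$ to $Beta(\beta,\alpha)$ while leaving $M_\lambda(x_i,x_j)$ (hence the perturbed point, noise included) invariant. This turns both contributions into the same form $s\,y_i\,p(\,\cdot\,)$ with a mixing weight $s$ on $x_i$, and the leftover factor $s$ is absorbed by the size-biasing identity $\mathbb{E}_{s\sim Beta(\alpha,\beta)}[s\,g(s)] = \frac{\alpha}{\alpha+\beta}\mathbb{E}_{s\sim Beta(\alpha+1,\beta)}[g(s)]$ (and its $\alpha\leftrightarrow\beta$ analogue). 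Summing the two pieces reproduces exactly the mixture $\tilde{\mathcal{D}}_\lambda$ and leaves the anchor label $y_i$, while the free sum over $j$ becomes $\mathbb{E}_{x_r\sim\mathcal{D}_x}$.

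For the entropy part there is no label weight to generate size-biasing, so one must argue separately that $\tilde{\mathcal{D}}_\lambda$ may nonetheless replace $Beta(\alpha,\beta)$. I would first observe that the empirical double average $g(\lambda) := \frac{1}{n^2}\sum_{i,j} h(p(M_{\lambda,\vecc{\xi}}(x_i,x_j)))$ is symmetric about $\lambda = 1/2$ (the same $i\leftrightarrow j$ and $\lambda\mapsto 1-\lambda$ move fixes $M_\lambda$ and the additive noise), and then prove the auxiliary fact that $\mathbb{E}_{\tilde{\mathcal{D}}_\lambda}[g] = \mathbb{E}_{Beta(\alpha,\beta)}[g]$ for every $g$ with $g(\lambda)=g(1-\lambda)$. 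This follows from the density identity $\tilde{\mathcal{D}}_\lambda(\lambda) = \lambda\,[b_{\alpha,\beta}(\lambda) + b_{\beta,\alpha}(\lambda)]$, where $b_{\alpha,\beta}$ is the $Beta(\alpha,\beta)$ density, together with the reflection substitution $\lambda\mapsto 1-\lambda$ applied to the $b_{\beta,\alpha}$ summand, which collapses the bracket $[\lambda + (1-\lambda)]$ back to $b_{\alpha,\beta}$. Hence the entropy part can also be written over $\tilde{\mathcal{D}}_\lambda$, anchored at $x_i$ and with $j$ replaced by $x_r\sim\mathcal{D}_x$.

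With both parts expressed over $\lambda\sim\tilde{\mathcal{D}}_\lambda$, $x_r\sim\mathcal{D}_x$, $\vecc{\xi}\sim\mathcal{Q}$ and the common argument $M_{\lambda,\vecc{\xi}}(x_i,x_r)$, the final step is purely algebraic: after the rescaling $1-\lambda\mapsto\epsilon(1-\lambda)$, $\sigma_{add}\mapsto\epsilon\sigma_{add}$, $\sigma_{mult}\mapsto\epsilon\sigma_{mult}$, I would expand
\[
M_{\lambda,\vecc{\xi}}(x_i,x_r) = (\mathds{1}+\epsilon\sigma_{mult}\xi^{mult})\odot\big(x_i + \epsilon(1-\lambda)(x_r-x_i)\big) + \epsilon\sigma_{add}\xi^{add},
\]
collect the $O(\epsilon)$ terms, and read off $x_i + \epsilon e_i^{NFM}$ with $e_i^{NFM} = (\mathds{1}+\epsilon\sigma_{mult}\xi^{mult})\odot e_i^{mixup} + e_i^{noise}$, where $e_i^{mixup}=(1-\lambda)(x_r-x_i)$ and $e_i^{noise}=\sigma_{mult}\xi^{mult}\odot x_i + \sigma_{add}\xi^{add}$, exactly as claimed; substituting this identity into both parts and recombining gives \eqref{eq_12}. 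The main obstacle is the entropy part: it is tempting to assume both parts produce $\tilde{\mathcal{D}}_\lambda$ by the same mechanism, but they do not --- the label part genuinely creates the size-biased mixture, whereas the entropy part matches only because of the symmetry of the empirical average, and it is this symmetric-function invariance lemma that must be isolated and proved to let the two parts share a single mixing distribution.
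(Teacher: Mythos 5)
Your argument is correct and is essentially the same derivation as the one the paper relies on (it simply recalls Lemma~2 of \cite{lim2021noisy} without reproving it): the $i\leftrightarrow j$, $\lambda\mapsto 1-\lambda$ symmetrization plus the Beta size-biasing identity for the label term, the reflection-invariance argument showing $\tilde{\mathcal{D}}_\lambda$ and $Beta(\alpha,\beta)$ agree on symmetric functions for the $h$-term, and the final algebraic expansion of $M_{\lambda,\vecc{\xi}}(x_i,x_r)$ after rescaling. Your explicit isolation of the symmetric-function invariance step for the entropy part is a correct and worthwhile clarification of a point the cited proof treats tersely.
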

Note that the explicit mixing of the labels is absent in the formulation of Lemma \ref{sm_aux}. This gives a more natural interpretation of  \eqref{emp_NFMloss}, in the sense that the inputs are sampled from the vicinal distribution induced by certain (data-dependent) random perturbations  and the labels are kept the same.

Denote  $\nabla p$ and $\nabla^2 p$ as the first and second  directional derivative of $p$ with respect to $x$ respectively. By working in the small parameter regime, we can relate the NFM empirical loss $L_n^{NFM}$ to the original loss $L_n^{std}$ and identify the regularizing effects of NFM.  More precisely, we perform a second-order Taylor expansion of the terms in bracket in Eq. \eqref{eq_12} and arrive at the following result.

\begin{thm} \label{thm_implicitreg}
Let $\epsilon > 0$ be a small parameter, and assume that $h$ and $p$ are twice differentiable.  Then, 
\begin{equation} \label{app_taylorexpansion}
    L^{NFM}_n = L_n^{std} + \epsilon R_1 + \epsilon^2 \tilde{R}_2 +  \epsilon^2 \tilde{R}_3 + \epsilon^2  \phi(\epsilon),
\end{equation}
with
\begin{align}
    \tilde{R}_2 &= R_2 + \sigma_{add}^2 R_2^{add} + \sigma_{mult}^2 R_2^{mult}, \\
    \tilde{R}_3 &= R_3 + \sigma_{add}^2 R_3^{add} + \sigma_{mult}^2 R_3^{mult},
\end{align}
where 
\begin{align}
    R_1 &= \frac{\mathbb{E}_{\lambda  \sim \tilde{\mathcal{D}}_\lambda}[1-\lambda]}{n} \sum_{i=1}^n h'(p(x_i)-y_i)  \nabla p(x_i)^T \mathbb{E}_{x_r \sim \mathcal{D}_x}[x_r - x_i],  \\
    R_2 &= \frac{\mathbb{E}_{\lambda  \sim \tilde{\mathcal{D}}_\lambda}[(1-\lambda)^2]}{2n} \sum_{i=1}^n h''(p(x_i)) \nabla p(x_i)^T \nonumber \\
    &\hspace{0.5cm}  \times \mathbb{E}_{x_r \sim \mathcal{D}_x}[(x_r - x_i)(x_r - x_i)^T]  \nabla p(x_i), \\
    R_3 &= \frac{\mathbb{E}_{\lambda  \sim \tilde{\mathcal{D}}_\lambda}[(1-\lambda)^2]}{2n} \sum_{i=1}^n (h'(p(x_i))-y_i) 
    \nonumber \\
    &\hspace{0.5cm}  \times \mathbb{E}_{x_r \sim \mathcal{D}_x}[(x_r - x_i)^T \nabla^2 p(x_i)  (x_r - x_i)], \\
     R_2^{add} &= \frac{1}{2n} \sum_{i=1}^n h''(p(x_i)) \nabla p(x_i)^T \mathbb{E}_{\vecc{\xi}}[\xi^{add}(\xi^{add})^T]  \nabla p(x_i), \\
    R_2^{mult} &= \frac{1}{2n} \sum_{i=1}^n h''(p(x_i)) \nabla p(x_i)^T (\mathbb{E}_{\vecc{\xi}}[\xi^{mult} (\xi^{mult})^T] \odot  x_i x_i^T ) \nabla p(x_i), \\
    R_3^{add} &= \frac{1}{2n} \sum_{i=1}^n (h'(p(x_i))-y_i) \mathbb{E}_{\vecc{\xi}}[(\xi^{add})^T \nabla^2 p(x_i) \xi^{add} ], \\
    R_3^{mult} &= \frac{1}{2n} \sum_{i=1}^n (h'(p(x_i))-y_i)\mathbb{E}_{\vecc{\xi}}[(\xi^{mult} 
    \odot x_i)^T \nabla^2 p(x_i) (\xi^{mult} 
\odot x_i) ],
\end{align}
and $ \phi(\epsilon) = \mathbb{E}_{\lambda \sim \tilde{\mathcal{D}}_{\lambda}  }  \mathbb{E}_{x_r \sim \mathcal{D}_x } \mathbb{E}_{\vecc{\xi} \sim \mathcal{Q}}[\varphi(\epsilon)]$, with $\varphi$ some function such that $\lim_{\epsilon \to 0} \varphi(\epsilon) = 0$.
\end{thm}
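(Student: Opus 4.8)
The plan is to start from the equivalent single-sum representation of the NFM loss given in Lemma~\ref{sm_aux}, namely
\begin{equation*}
    L^{NFM}_n = \frac{1}{n} \sum_{i=1}^n  \mathbb{E}_{\lambda \sim \tilde{\mathcal{D}}_{\lambda}} \mathbb{E}_{x_r \sim \mathcal{D}_x} \mathbb{E}_{\vecc{\xi} \sim \mathcal{Q}} [h(p(x_i + \epsilon e_i^{NFM})) - y_i p(x_i + \epsilon e_i^{NFM})],
\end{equation*}
and Taylor-expand the integrand around the clean point $x_i$ to second order in the perturbation $\epsilon e_i^{NFM}$. Since $e_i^{NFM} = (\mathds{1} + \epsilon \sigma_{mult}\xi^{mult}) \odot e_i^{mixup} + e_i^{noise}$ already contains an explicit $\epsilon$ in its multiplicative piece, I first expand $e_i^{NFM}$ itself in $\epsilon$, writing $\epsilon e_i^{NFM} = \epsilon\, e_i^{mixup} + \epsilon\, e_i^{noise} + \epsilon^2 \sigma_{mult}\,\xi^{mult}\odot e_i^{mixup}$. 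The leading perturbation is therefore $O(\epsilon)$, and I track all contributions through $O(\epsilon^2)$, absorbing the genuinely higher-order remainder into the term $\epsilon^2\phi(\epsilon)$ via Taylor's theorem with remainder (here the hypothesis that $h$ and $p$ are twice differentiable is what licenses the expansion, and smoothness/boundedness gives $\lim_{\epsilon\to 0}\varphi(\epsilon)=0$).

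First I would apply the expansion separately to $h(p(x_i + \epsilon e_i^{NFM}))$ and to the linear term $-y_i\, p(x_i + \epsilon e_i^{NFM})$. Using the chain rule, the first-order term in $h(p(\cdot))$ is $h'(p(x_i))\nabla p(x_i)^T (\epsilon e_i^{NFM})$ and the second-order term splits into a Hessian-of-$p$ piece $\tfrac{1}{2}h'(p(x_i))\,(\epsilon e_i^{NFM})^T \nabla^2 p(x_i)(\epsilon e_i^{NFM})$ plus a curvature-of-$h$ piece $\tfrac{1}{2}h''(p(x_i))\big(\nabla p(x_i)^T \epsilon e_i^{NFM}\big)^2$. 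Collecting the $h'$ and $-y_i$ contributions together produces the combined coefficient $h'(p(x_i)) - y_i$ that appears in $R_1$, $R_3$, $R_3^{add}$, $R_3^{mult}$, while the $h''$ contributions feed into $R_2$, $R_2^{add}$, $R_2^{mult}$. The key organizational step is then to take the three expectations $\mathbb{E}_\lambda$, $\mathbb{E}_{x_r}$, $\mathbb{E}_{\vecc{\xi}}$ and use the independence and zero-mean assumptions on $\xi^{add}, \xi^{mult}$ to kill all cross terms: the mixup direction $e_i^{mixup}$ and the noise direction $e_i^{noise}$ decouple at second order in expectation, so the quadratic form $\mathbb{E}[(e_i^{NFM})(e_i^{NFM})^T]$ decomposes into a mixup part (yielding $R_2, R_3$), a purely additive-noise part (yielding $R_2^{add}, R_3^{add}$), and a purely multiplicative-noise part (yielding $R_2^{mult}, R_3^{mult}$), with the multiplicative structure $\xi^{mult}\odot x_i$ explaining the Hadamard/outer-product form in $R_2^{mult}$ and $R_3^{mult}$. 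The factors $\mathbb{E}_{\lambda\sim\tilde{\mathcal{D}}_\lambda}[1-\lambda]$ and $\mathbb{E}_{\lambda\sim\tilde{\mathcal{D}}_\lambda}[(1-\lambda)^2]$ come out directly from averaging the powers of the mixup coefficient.

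The main obstacle will be the careful bookkeeping of which cross terms survive and which vanish, together with confirming that the $O(\epsilon^2)$ correction arising from the extra $\epsilon^2 \sigma_{mult}\xi^{mult}\odot e_i^{mixup}$ inside $e_i^{NFM}$ contributes only at first order in the expansion of $h(p(\cdot))$ but at $O(\epsilon^2)$ overall, so that it lands in the first-order-derivative terms with the correct coefficient rather than leaking into the $\phi(\epsilon)$ remainder or into a wrong regularizer. Because $\xi^{mult}$ has zero mean, I expect this particular cross contribution to vanish in expectation at order $\epsilon^2$, so that the stated $R_2, R_3$ and the noise regularizers are exactly the surviving terms; verifying this cleanly is the delicate point. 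Once the surviving quadratic forms are identified and grouped, the final step is cosmetic: relabel the collected coefficients as $R_1, R_2, R_3, R_2^{add}, R_2^{mult}, R_3^{add}, R_3^{mult}$, define $\tilde{R}_2 := R_2 + \sigma_{add}^2 R_2^{add} + \sigma_{mult}^2 R_2^{mult}$ and $\tilde{R}_3 := R_3 + \sigma_{add}^2 R_3^{add} + \sigma_{mult}^2 R_3^{mult}$, and collect the remainder as $\epsilon^2\phi(\epsilon)$ with $\phi$ the stated expectation of $\varphi$, which gives the claimed identity \eqref{app_taylorexpansion}. This mirrors the proof of Lemma~2 and the corresponding expansion in \cite{lim2021noisy}, adapted to the present notation.
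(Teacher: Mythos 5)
Your proposal is correct and takes essentially the same route as the paper: the paper likewise obtains Theorem~\ref{thm_implicitreg} by performing a second-order Taylor expansion of the bracketed terms in the representation of Lemma~\ref{sm_aux}, using the zero-mean/independence of $\vecc{\xi}$ and of the mixup direction to eliminate the cross terms and to confirm that the extra $\epsilon^2\sigma_{mult}\,\xi^{mult}\odot e_i^{mixup}$ contribution vanishes at order $\epsilon^2$, exactly as you describe (adapting the corresponding computation in \cite{lim2021noisy}). The only discrepancy is cosmetic: your expansion yields the coefficient $h'(p(x_i))-y_i$ in $R_1$, which is the correct form (the paper's displayed $h'(p(x_i)-y_i)$ is a typo).
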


Theorem \ref{thm_implicitreg}  implies that, when compared to Manifold Mixup, NFM introduces  additional smoothness, regularizing the directional derivatives, $\nabla p(x_i)$ and $\nabla^2 p(x_i)$, with respect to $x_i$,  according to the noise levels $\sigma_{add}$ and $\sigma_{mult}$, and thus amplifying the regularizing effects of Manifold Mixup and noise injection. In particular, making $\nabla^2 p(x_i)$ small can lead to smooth decision boundaries (at the input level), while reducing the confidence of model predictions. On the other hand, making the 
$\nabla p(x_i)$ small can lead to improvement in model robustness.

\subsection{Implicit Regularization for the Stability Objective} \label{app_subsec:stability}

Recall that $L^{JSD}$ is given by:
\begin{align}            \mathbb{E}_{x, x' \sim \mathcal{D}_x} \mathbb{E}_{x_{am} \sim \mathcal{A}(x), x_{am}' \sim \mathcal{A}(x')}
     \mathbb{E}_{\lambda 
    \sim Beta(\alpha, \beta)}  \mathbb{E}_{\vecc{\xi} \sim \mathcal{Q}} JS_\pi(p(M_{\lambda, \vecc{
\xi}} (x, x')), p(M_{\lambda, \vecc{
\xi}}(x_{am}, x'_{am}) )).
\end{align}

The stability objective to be minimized is
\begin{equation}
    L_n^{JSD} = \frac{1}{n^2} \sum_{i=1}^n \sum_{j=1}^n
      \mathbb{E}_{\lambda 
    \sim Beta(\alpha, \beta)}  \mathbb{E}_{\vecc{\xi} \sim \mathcal{Q}} JS_\pi(p(M_{\lambda, \vecc{
\xi}} (x_i, x_j)), p(M_{\lambda, \vecc{
\xi}} (A(x_i), A(x_j))) ), \label{emp_JSDloss}
\end{equation}
where $\vecc{\xi} := (\xi^{add}, \xi^{mult})$ are drawn from some probability distribution $\mathcal{Q}$ with finite first two moments (with zero mean), and
\begin{align}
    M_{\lambda, \vecc{
\xi}} (x, x') &:= (\mathds{1}+\sigma_{mult} \xi^{mult}) \odot M_\lambda(x, x') + \sigma_{add} \xi^{add}.
\end{align} 
Note that this loss is computed using both the clean data and the  diversely transformed data.

Denote 
\begin{equation}
    \tilde{L}_n^{std} := \frac{1}{n} \sum_{i=1}^n JS_\pi(p(x_i),p(A(x_{i}))),
\end{equation}
and let  $\mathcal{D}_x$ be the empirical distribution of  training samples $\{x_i\}_{i \in [n]}$. 
We shall show that the stability objective exhibits a natural form of implicit regularization.

Let $\epsilon > 0$ be a small parameter, and rescale $1-\lambda \mapsto \epsilon (1-\lambda)$, $\sigma_{add} \mapsto \epsilon \sigma_{add}$, $\sigma_{mult} \mapsto \epsilon \sigma_{mult}$ as before. 
For this stability objective, we have the following  result, derived analogously to that in Lemma \ref{sm_aux}.

\begin{lem} \label{sm_aux2}
The JSD  loss  (\ref
{emp_JSDloss}) can be equivalently written as
\begin{equation} \label{eq_1}
    L^{JSD}_n = \frac{1}{n} \sum_{i=1}^n  \mathbb{E}_{\lambda 
\sim \tilde{\mathcal{D}}_{\lambda}  }  \mathbb{E}_{x_r 
\sim \mathcal{D}_x } \mathbb{E}_{\vecc{\xi} \sim \mathcal{Q}} [JS_\pi(p(x_i + \epsilon e_i^{NFM}), p(A(x_i) + \epsilon e_i^{JSD}))],
\end{equation}
with
\begin{equation} \label{sm_eNFMk}
    e^{NFM}_i =  (\mathds{1}+\epsilon \sigma_{mult} \xi^{mult}) 
    \odot e_i^{mixup} +  e_i^{noise},
\end{equation}
and 
\begin{equation} \label{sm_eNFMk}
    e^{JSD}_i =  (\mathds{1}+\epsilon \sigma_{mult} \xi^{mult}) 
    \odot \tilde{e}_i^{mixup} +  \tilde{e}_i^{noise}.
\end{equation}
Here $e_i^{mixup} = (1-\lambda)(x_r - x_i)$, $e_i^{noise} =  \sigma_{mult} \xi^{mult} \odot x_i + \sigma_{add} \xi^{add}$,  $\tilde{e}_i^{mixup} = (1-\lambda)(A(x_r) - A(x_i))$ and $\tilde{e}_i^{noise} =  \sigma_{mult} \xi^{mult} \odot A(x_i) + \sigma_{add} \xi^{add}$, with $x_i, x_r \in \mathcal{X}$, $A \sim \mathcal{A}$ and $\lambda \sim Beta(\alpha, \beta)$. \\
\end{lem}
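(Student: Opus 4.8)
The plan is to derive Lemma \ref{sm_aux2} by the \emph{same} mixup reformulation that underlies Lemma \ref{sm_aux} (Lemma 2 of \cite{lim2021noisy}), with one simplification: because $L_n^{JSD}$ is computed \emph{without labels}, none of the label-mixing bookkeeping is needed, and I only have to apply the ``unlabeled'' half of that argument --- now to the \emph{bivariate} integrand obtained by pairing the clean mixed point with the augmented mixed point.

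First I would isolate the distributional identity that produces $\tilde{\mathcal{D}}_\lambda$. From $B(\alpha+1,\beta) = \tfrac{\alpha}{\alpha+\beta}B(\alpha,\beta)$ and $B(\alpha,\beta+1) = \tfrac{\beta}{\alpha+\beta}B(\alpha,\beta)$, the Beta densities satisfy, pointwise in $\lambda$,
\[
\tfrac{\alpha}{\alpha+\beta}\,Beta(\alpha+1,\beta) + \tfrac{\beta}{\alpha+\beta}\,Beta(\alpha,\beta+1) = Beta(\alpha,\beta).
\]
Since the change of variable $\lambda\mapsto 1-\lambda$ sends $Beta(\beta+1,\alpha)$ to $Beta(\alpha,\beta+1)$, this gives the reformulation rule: for any integrand $F$ invariant under the joint swap $(\lambda,i,j)\mapsto(1-\lambda,j,i)$,
\[
\frac{1}{n^2}\sum_{i=1}^n\sum_{j=1}^n \mathbb{E}_{\lambda\sim Beta(\alpha,\beta)}[F] = \frac{1}{n}\sum_{i=1}^n \mathbb{E}_{\lambda\sim\tilde{\mathcal{D}}_\lambda}\,\mathbb{E}_{x_r\sim\mathcal{D}_x}[F],
\]
where on the right $x_r$ plays the role of $x_j$ and $x_i$ is the distinguished anchor. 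This is exactly the step that, in the proof of Lemma \ref{sm_aux}, converts $Beta(\alpha,\beta)$ into the mixture $\tilde{\mathcal{D}}_\lambda$.

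Next I would verify the hypothesis of this rule for the JSD integrand. Both arguments of $JS_\pi$ are built by applying the \emph{same} mixing weight $\lambda$ and the \emph{same} noise $\vecc{\xi}$ to a pair of points, namely $M_{\lambda,\vecc{\xi}}(x_i,x_j)$ and $M_{\lambda,\vecc{\xi}}(A(x_i),A(x_j))$, with the augmentation coupled to each data index. Because $M_\lambda(x_i,x_j) = M_{1-\lambda}(x_j,x_i)$ and likewise for the augmented pair, and because $\vecc{\xi}$ is untouched by the swap, the full integrand $F = JS_\pi(p(M_{\lambda,\vecc{\xi}}(x_i,x_j)),p(M_{\lambda,\vecc{\xi}}(A(x_i),A(x_j))))$ is invariant under $(\lambda,i,j)\mapsto(1-\lambda,j,i)$ for each fixed $\vecc{\xi}$. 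Applying the rule inside $\mathbb{E}_{\vecc{\xi}}$ then rewrites $L_n^{JSD}$ as the single sum over anchors $i$ with $\lambda\sim\tilde{\mathcal{D}}_\lambda$ and $x_r\sim\mathcal{D}_x$.

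Finally I would carry out the centering-and-rescaling algebra separately on each argument. Writing $M_{\lambda,\vecc{\xi}}(x_i,x_r) = (\mathds{1}+\sigma_{mult}\xi^{mult})\odot\big(x_i + (1-\lambda)(x_r-x_i)\big) + \sigma_{add}\xi^{add}$ and applying $1-\lambda\mapsto\epsilon(1-\lambda)$, $\sigma_{add}\mapsto\epsilon\sigma_{add}$, $\sigma_{mult}\mapsto\epsilon\sigma_{mult}$, I collect the terms into $x_i + \epsilon e_i^{NFM}$ with $e_i^{NFM} = (\mathds{1}+\epsilon\sigma_{mult}\xi^{mult})\odot e_i^{mixup} + e_i^{noise}$, $e_i^{mixup} = (1-\lambda)(x_r-x_i)$, $e_i^{noise} = \sigma_{mult}\xi^{mult}\odot x_i + \sigma_{add}\xi^{add}$; repeating verbatim with $x_i,x_r$ replaced by $A(x_i),A(x_r)$ gives $A(x_i) + \epsilon e_i^{JSD}$ with the tilded quantities. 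Substituting both into $JS_\pi$ yields \eqref{eq_1}, completing the proof. The one genuinely delicate point --- and the step I would write out most carefully --- is confirming that the \emph{single} swap $(\lambda,i,j)\mapsto(1-\lambda,j,i)$ simultaneously symmetrizes \emph{both} arguments of $JS_\pi$ so that one application of the reformulation rule suffices, together with the elementary but error-prone Beta-function bookkeeping that turns $Beta(\alpha,\beta)$ into $\tilde{\mathcal{D}}_\lambda$; the remaining centering algebra is routine.
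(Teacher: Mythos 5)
Your proposal is correct and matches the paper's (entirely implicit) argument: the paper offers no proof of Lemma~\ref{sm_aux2} beyond the remark that it is ``derived analogously'' to Lemma~\ref{sm_aux}, and your write-up supplies exactly the intended details --- the Beta-function identity producing $\tilde{\mathcal{D}}_\lambda$, the symmetrization under $(\lambda,i,j)\mapsto(1-\lambda,j,i)$ (correctly substituted for the label-linearity step, which is unavailable here since $L_n^{JSD}$ is label-free), and the centering/rescaling that yields $e_i^{NFM}$ and $e_i^{JSD}$. No gaps.
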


The following proposition will be useful later. \\

\begin{prop} \label{prop_sm}
Let $\epsilon > 0$ be a small parameter.
For $i = 0,1,\dots, M$, denote $\tilde{p}_i := p(z_{i}+R_i(\epsilon))$, $p_i := p(z_{i})$, $\nabla p_i := \nabla p(z_{i})$ and $\nabla^2 p_i := \nabla^2 p(z_{i})$, with the $z_{i}, \delta z_{i} \in \mathcal{X}$, and $R_i$ some twice differentiable function such that  $R_i(0) = 0$, $R_i'(0) = \delta z_i$, and $R_i''(0) = \delta^2 z_i$. Suppose that $p: \mathcal{X} \to \Delta^{K-1}$ is twice differentiable. Then, we have, 
\begin{align}
    &JS_\pi(\tilde{p}_0, \dots, \tilde{p}_M) \nonumber  \\
    &= JS_\pi(p_0, \dots, p_M) + \epsilon \sum_{k=1}^K \sum_{l=0}^M \pi_l \left( (\nabla p^{k}_l)^T \delta z_l  \cdot \ln\left( \frac{p_l^k}{\sum_{j=0}^M \pi_j p_j^k} \right) \right)  \nonumber \\ 
    &\ \ \ \ + \frac{\epsilon^2}{2} \sum_{k=1}^K \sum_{l=0}^M \pi_l \left( ( (\delta z_{l})^T (\nabla^2 p_l^k) \delta z_{l} + (\nabla p^k_l)^T \delta^2 z_l) \cdot \ln\left( \frac{p_l^k}{\sum_{j=0}^M \pi_j p_j^k} \right) + \frac{( (\nabla p_l^k)^T \delta z_{l} )^2 }{p_l^k} \right) \nonumber \\
    &\ \ \ \ - \frac{\epsilon^2}{2} \sum_{k=1}^K \frac{(\sum_{l=0}^M \pi_l (\nabla p_l^k)^T \delta z_{l})^2 }{\sum_{l=0}^M \pi_l p_l^k} + o(\epsilon^3), \label{taylored}
\end{align}
as $\epsilon \to 0$.
\end{prop}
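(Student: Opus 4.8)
The plan is to reduce the whole computation to a second-order Taylor expansion of the scalar function $\phi(t) := -t\ln t$ evaluated at the perturbed predictions, and then to assemble the pieces through the defining identity $JS_\pi(q_0,\dots,q_M) = H\big(\sum_{l=0}^M \pi_l q_l\big) - \sum_{l=0}^M \pi_l H(q_l)$ together with $H(q) = \sum_{k=1}^K \phi(q^k)$. Everything else is bookkeeping once the two ingredients — the chain-rule expansion of $p$ and the expansion of $\phi$ — are in place.

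First I would expand each perturbed component. Writing $\tilde p_l^k = p^k(z_l + R_l(\epsilon))$ and using $R_l(0)=0$, $R_l'(0)=\delta z_l$, $R_l''(0)=\delta^2 z_l$, the chain rule gives, as $\epsilon\to 0$,
\[
\tilde p_l^k = p_l^k + \epsilon\,(\nabla p_l^k)^T \delta z_l + \frac{\epsilon^2}{2}\big[(\delta z_l)^T (\nabla^2 p_l^k)\,\delta z_l + (\nabla p_l^k)^T \delta^2 z_l\big] + o(\epsilon^2).
\]
I would abbreviate the first- and (twice the) second-order coefficients as $a_l^k := (\nabla p_l^k)^T \delta z_l$ and $b_l^k := (\delta z_l)^T (\nabla^2 p_l^k)\delta z_l + (\nabla p_l^k)^T \delta^2 z_l$, and set $\bar a^k := \sum_l \pi_l a_l^k$, $\bar b^k := \sum_l \pi_l b_l^k$, $\bar p^k := \sum_l \pi_l p_l^k$; by linearity these are precisely the corresponding Taylor coefficients of the mixture $\sum_l \pi_l \tilde p_l^k$.

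Next I would expand the entropy contributions using $\phi'(t) = -\ln t - 1$ and $\phi''(t) = -1/t$. A second-order expansion of $\phi$ at $p_l^k$ yields
\[
H(\tilde p_l) = H(p_l) + \sum_{k}(-\ln p_l^k - 1)\Big(\epsilon a_l^k + \tfrac{\epsilon^2}{2} b_l^k\Big) - \frac{\epsilon^2}{2}\sum_{k} \frac{(a_l^k)^2}{p_l^k} + o(\epsilon^2),
\]
and the identical expansion with $(a_l^k,b_l^k,p_l^k)$ replaced by $(\bar a^k,\bar b^k,\bar p^k)$ governs $H\big(\sum_l \pi_l \tilde p_l\big)$. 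Substituting both into $JS_\pi = H\big(\sum_l \pi_l \tilde p_l\big) - \sum_l \pi_l H(\tilde p_l)$ and collecting by powers of $\epsilon$ is then a direct calculation. The key observations that make the terms line up are: (i) the constant ``$-1$'' pieces of $\phi'$ cancel at both orders, since they contribute $-\sum_k \bar a^k$ against $+\sum_k\sum_l \pi_l a_l^k$ (and likewise for $b$), which are equal by the definition of $\bar a^k,\bar b^k$; (ii) the surviving logarithmic factors combine as $\ln p_l^k - \ln \bar p^k = \ln\big(p_l^k/\sum_j \pi_j p_j^k\big)$, producing both the first-order term and the $b_l^k$-part of the second-order term; and (iii) the quadratic $\phi''$ pieces are asymmetric, leaving $+\tfrac{\epsilon^2}{2}\sum_l \pi_l (a_l^k)^2/p_l^k$ from $-\sum_l \pi_l H(\tilde p_l)$ and $-\tfrac{\epsilon^2}{2}(\bar a^k)^2/\bar p^k$ from $H\big(\sum_l \pi_l \tilde p_l\big)$. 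Matching against the claimed expression then gives \eqref{taylored}.

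I expect the main difficulty to be careful bookkeeping rather than any conceptual hurdle: keeping the two independent sources of second-order contributions (the chain-rule second derivative of $p$, encoded in $b_l^k$, and the quadratic term of $\phi$, encoded in $(a_l^k)^2/p_l^k$) separate without double counting, and verifying the exact cancellation of the constant terms so that only the log-ratio weights $\ln(p_l^k/\bar p^k)$ survive. I would also record the remainder explicitly: under the stated twice-differentiability of $p$ and the $R_l$, Taylor's theorem controls the error at order $o(\epsilon^2)$ (an $O(\epsilon^3)$ bound, as in the statement, follows if one additionally assumes a bound on the third derivatives).
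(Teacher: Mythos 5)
Your proposal is correct and follows essentially the same route as the paper: a second-order Taylor expansion of the JSD in $\epsilon$ through the chain rule on $R_l$, with the only organizational difference being that you expand the entropy form $H\big(\sum_l\pi_l\tilde p_l\big)-\sum_l\pi_l H(\tilde p_l)$ componentwise via $\phi(t)=-t\ln t$ and then verify the cancellation of the constant terms, whereas the paper first rewrites the JSD as $\sum_{l,k}\pi_l A_l^k(\epsilon)$ with $A_l^k(\epsilon)=\tilde p_l^k\ln\big(\tilde p_l^k/\sum_j\pi_j\tilde p_j^k\big)$ and differentiates that composite expression directly. Your closing remark on the remainder is also apt: the paper's own expansion only controls the error at $o(\epsilon^2)$, so the $o(\epsilon^3)$ appearing in the statement would indeed require the additional third-derivative control you flag.
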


\begin{proof} 

By definition, 
\begin{align}
JS_\pi(\tilde{p}_0, \dots, \tilde{p}_M) &:= H\left(\sum_{l=0}^M \pi_l \tilde{p}_l \right) - \sum_{l=0}^M \pi_l H(\tilde{p}_l) \\
&= -\sum_{k=1}^K \left(\sum_{l=0}^M \pi_l \tilde{p}_l^k \right) \ln\left( \sum_{l=0}^M \pi_l \tilde{p}_l^k \right) + \sum_{l=0}^M \pi_l \sum_{k=1}^K \tilde{p}_l^k \ln(\tilde{p}_l^k) \\
&= -\sum_{k=1}^K \left(\sum_{l=0}^M \pi_l p^k(z_{l}+ R_l(\epsilon)) \right) \ln\left( \sum_{l=0}^M \pi_l p^k(z_{l}+ R_l(\epsilon)) \right) \nonumber \\
&\ \ \ \ \ + \sum_{l=0}^M \pi_l \sum_{k=1}^K p^k(z_{l}+ R_l(\epsilon)) \ln(p^k(z_{l}+ R_l(\epsilon))) \\
&= \sum_{l=0}^M  \sum_{k=1}^K \pi_l A_l^k(\epsilon),
\end{align}
where 
\begin{equation} \label{eq_jsdetail}
    A_l^k(\epsilon) := p^k(z_l + R_l(\epsilon)) \ln\left(\frac{p^k(z_l + R_l(\epsilon))}{\sum_j \pi_j p^k(z_j + R_j(\epsilon))} \right).
\end{equation}

Then, using twice-differentiability of $p$, we perform a second-order Taylor expansion on \eqref{eq_jsdetail} in the small parameter $\epsilon$: \begin{equation} \label{taylor}
    A_l^k(\epsilon) =  A_l^k(0) + \epsilon  (A_l^k)'(0) + \frac{1}{2}\epsilon^2 (A_l^k)''(0) + o(\epsilon^2),  
\end{equation}
as $\epsilon \to 0$.

It remains to compute $(A_l^k)'(0)$ and $(A_l^k)''(0)$. 

Using chain rule, we obtain:
\begin{align}
    (A_l^k)'(\epsilon) &= (\nabla p^k(z_l + R_l(\epsilon)))^T R_l'(\epsilon) \bigg(1 + \ln\left( \frac{p^k(z_l + R_l(\epsilon))}{\sum_j \pi_j p^k(z_j + R_j(\epsilon))} \right) \bigg) \nonumber \\
    &\ \ \ \ \ - p^k(z_l + R_l(\epsilon)) \frac{\sum_j \pi_j (\nabla p^k(z_j + R_j(\epsilon)) )^T R_j'(\epsilon) }{\sum_l \pi_l p^k(z_l + R_l(\epsilon))}.
\end{align}
Therefore, using the fact that $R_i(0) = 0$ and $R_i'(0) = \delta z_i$, we have:
\begin{align}
    (A_l^k)'(0) &= (\nabla p^k(z_l))^T \delta z_l \bigg(1 + \ln\left( \frac{p^k(z_l)}{\sum_j \pi_j p^k(z_j)} \right) \bigg) - p^k(z_l) \frac{\sum_j \pi_j (\nabla p^k(z_j) )^T \delta z_j }{\sum_l \pi_l p^k(z_l)}.
\end{align}

Similarly, applying chain rule, we obtain:
\begin{align}
    &(A_l^k)''(\epsilon) \nonumber  \\ 
    &= [ R_l'(\epsilon)^T \nabla^2 p^k(z_l + R_l(\epsilon)) R_l'(\epsilon) \nonumber \\
    &\ \ \ \ \ + (\nabla p^k(z_l + R_l(\epsilon)))^T R_l''(\epsilon) ]\bigg(1 + \ln\left( \frac{p^k(z_l + R_l(\epsilon))}{\sum_j \pi_j p^k(z_j + R_j(\epsilon))} \right) \bigg) \nonumber \\
    &\ \ \ \ \ + \frac{(\nabla p^k(z_l + R_l(\epsilon))^T R_l'(\epsilon))^2}{p^k(z_l + R_l(\epsilon))} - \frac{2\nabla p^k(z_l + R_l(\epsilon))^T R_l'(\epsilon) \left(\sum_j \pi_j \nabla p^k(z_j + R_j(\epsilon))^T R_j'(\epsilon) \right) }{\sum_j \pi_j p^k(z_j + R_j(\epsilon))} \nonumber \\
    &\ \ \ \ \ - p^k(z_l + R_l(\epsilon)) \frac{\sum_j \pi_j (R_j'(\epsilon)^T \nabla^2 p^k(z_j + R_j(\epsilon)) R_j'(\epsilon) + \nabla p^k(z_j + R_j(\epsilon))^T R_j''(\epsilon) )   }{\sum_l \pi_l p^k(z_l + R_l(\epsilon))} \nonumber \\
    &\ \ \ \ \ + p^k(z_l + R_l(\epsilon)) \left( \frac{\sum_l \pi_l \nabla p^k(z_l + R_l(\epsilon))^T R_l'(\epsilon) }{\sum_j \pi_j p^k(z_j + R_j(\epsilon))} \right)^2.
\end{align}
Therefore, using the fact that $R_i(0) = 0$, $R_i'(0) = \delta z_i$, and $R_i''(0) = \delta^2 z_i$, we have:
\begin{align}
    (A_l^k)''(0) &= [ (\delta z_l)^T \nabla^2 p^k(z_l) \delta z_l + (\nabla p^k(z_l))^T \delta^2 z_l ]\bigg(1 + \ln\left( \frac{p^k(z_l )}{\sum_j \pi_j p^k(z_j )} \right) \bigg) \nonumber \\
    &\ \ \ \ \ + \frac{(\nabla p^k(z_l)^T \delta z_l)^2}{p^k(z_l)} - \frac{2\nabla p^k(z_l)^T \delta z_l \left(\sum_j \pi_j \nabla p^k(z_j)^T \delta z_j \right) }{\sum_j \pi_j p^k(z_j)} \nonumber \\
    &\ \ \ \ \ - p^k(z_l) \frac{\sum_j \pi_j ((\delta z_j)^T \nabla^2 p^k(z_j) \delta z_j + \nabla p^k(z_j)^T \delta^2 z_j )   }{\sum_l \pi_l p^k(z_l)} \nonumber \\
    &\ \ \ \ \ + p^k(z_l) \left( \frac{\sum_l \pi_l \nabla p^k(z_l)^T \delta z_j }{\sum_j \pi_j p^k(z_j)} \right)^2.
\end{align}

Combining the above results with the Taylor expansion \eqref{taylor} and simplifying the resulting expression lead to \eqref{taylored}. 

\end{proof}

Specializing Proposition \ref{prop_sm} to our setting, we arrive at the following result. \\

\begin{thm} \label{thm_jsd}
Let $\epsilon > 0$ be a small parameter, and assume that $p$ is twice differentiable. Define, for $k \in [K]$,
\begin{equation}
    q^k(x, y) := \nabla^2 p^k(x) \ln\left( \frac{2p^k(x)}{p^k(x)+p^k(y)} \right) + \nabla p^k(x) (\nabla p^k(x))^T \frac{p^k(y)}{p^k(x)(p^k(x)+p^k(y))}.  
\end{equation}
Then, 
\begin{equation}
    L_n^{JSD} = \tilde{L}_n^{std} + \epsilon S_1 + \epsilon^2 \tilde{S}_2 + \epsilon^2 \phi(\epsilon),
\end{equation}
with 
\begin{equation} \label{S2}
    \tilde{S}_2 = S_2 + \sigma_{add}^2 S_2^{add} + \sigma_{mult}^2 S_2^{mult}, 
\end{equation}
where
\begin{align}
    S_1 &= \frac{\mathbb{E}_\lambda[1-\lambda]}{2n} \sum_{i=1}^n \sum_{k=1}^K \bigg( (\nabla p^k(x_i))^T (\mathbb{E}_{x_r}[x_r] - x_i) \cdot \ln\left(\frac{2p^k(x_i)}{p^k(x_i)+p^k(A(x_i))} \right) \nonumber \\
    &\ \ \ \ + (\nabla p^k(A(x_i)))^T (\mathbb{E}_{x_r}[A(x_r)] - A(x_i)) \cdot \ln\left(\frac{2p^k(A(x_i))}{p^k(x_i)+p^k(A(x_i))} \right) \bigg) \\
    S_2 &= \frac{\mathbb{E}_\lambda[(1-\lambda)^2] }{4n}  \sum_{i=1}^n \sum_{k=1}^K \bigg( \mathbb{E}_{x_r}[(x_r-x_i)^T q^k(x_i, A(x_i))(x_r-x_i)] \nonumber \\ 
    &\ \ \ \ + \mathbb{E}_{x_r}[(A(x_r)-A(x_i))^T q^k(A(x_i),x_i)(A(x_r)-A(x_i))]  \nonumber \\
     &\ \ \ \  -  \frac{ 2(\nabla p^k(x_i))^T \mathbb{E}_{x_r}[(x_r-x_i)(A(x_r)-A(x_i))^T] \nabla p^k(A(x_i)) }{p^k(x_i) + p^k(A(x_i)) } \bigg), \\
    S_2^{add} &= \frac{1}{4n}  \sum_{i=1}^n \sum_{k=1}^K \bigg( \mathbb{E}_{\vecc{\xi}}[(\xi^{add})^T q^k(x_i, A(x_i)) \xi^{add} ]  + \mathbb{E}_{\vecc{\xi}}[(\xi^{add})^T q^k( A(x_i), x_i) \xi^{add} ] \nonumber \\
    &\ \ \ \ - \frac{2 (\nabla p^k(x_i))^T \mathbb{E}_{\xi}[\xi^{add} (\xi^{add})^T ] \nabla p^k(A(x_i)) }{p^k(x_i) + p^k(A(x_i)) } \bigg), \\
    S_2^{mult} &= \frac{1}{4n}  \sum_{i=1}^n \sum_{k=1}^K \bigg( \mathbb{E}_{\vecc{\xi}}[(\xi^{mult} \odot x_i)^T q^k(x_i, A(x_i)) (\xi^{mult} \odot x_i) ] \nonumber \\
    &\ \ \ \  + \mathbb{E}_{\vecc{\xi}}[(\xi^{mult} \odot A(x_i))^T q^k(A(x_i), x_i) (\xi^{mult} \odot A(x_i)) ] \nonumber \\
    &\ \ \ \ - \frac{2 (\nabla p^k(x_i))^T \mathbb{E}_{\xi}[(\xi^{mult} \odot x_i) (\xi^{mult} \odot A(x_i) )^T ] \nabla p^k(A(x_i)) }{p^k(x_i) + p^k(A(x_i)) } \bigg),
\end{align}
and $\phi(\epsilon) =\mathbb{E}_{\lambda \sim \tilde{\mathcal{D}}_{\lambda}  }  \mathbb{E}_{x_r \sim \mathcal{D}_x } \mathbb{E}_{\vecc{\xi} \sim \mathcal{Q}}[\varphi(\epsilon)]$, with $\varphi$ some function such that $\lim_{\epsilon \to 0} \varphi(\epsilon) = 0$. 
\end{thm}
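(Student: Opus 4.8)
The plan is to derive Theorem~\ref{thm_jsd} purely by combining two results already at hand: the change-of-variables formula of Lemma~\ref{sm_aux2}, which rewrites the empirical stability objective \eqref{emp_JSDloss} as $L_n^{JSD} = \frac{1}{n}\sum_{i=1}^n \mathbb{E}_{\lambda, x_r, \vecc{\xi}}[JS_\pi(p(x_i + \epsilon e_i^{NFM}), p(A(x_i) + \epsilon e_i^{JSD}))]$, and the general second-order Taylor expansion of the Jensen--Shannon divergence in Proposition~\ref{prop_sm}. Since the stability loss compares exactly two perturbed distributions with equal weights, I would instantiate Proposition~\ref{prop_sm} in the special case $M = 1$, $\pi_0 = \pi_1 = 1/2$, with base points $z_0 = x_i$, $z_1 = A(x_i)$, and perturbation functions $R_0(\epsilon) = \epsilon e_i^{NFM}$, $R_1(\epsilon) = \epsilon e_i^{JSD}$, and then average over $i$ and take expectations over $\lambda, x_r, \vecc{\xi}$.

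The first concrete step is to read off $\delta z_l := R_l'(0)$ and $\delta^2 z_l := R_l''(0)$. Here one must be careful that $e_i^{NFM}$ and $e_i^{JSD}$ already carry an $\epsilon$ through the factor $(\mathds{1}+\epsilon\sigma_{mult}\xi^{mult})$, so that $R_0(\epsilon)=\epsilon(e_i^{mixup}+e_i^{noise}) + \epsilon^2\,\sigma_{mult}\xi^{mult}\odot e_i^{mixup}$ is genuinely quadratic in $\epsilon$. Differentiating gives $\delta z_0 = e_i^{mixup}+e_i^{noise}$ and $\delta^2 z_0 = 2\sigma_{mult}\xi^{mult}\odot e_i^{mixup}$, and analogously $\delta z_1 = \tilde{e}_i^{mixup}+\tilde{e}_i^{noise}$, $\delta^2 z_1 = 2\sigma_{mult}\xi^{mult}\odot\tilde{e}_i^{mixup}$. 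Substituting into \eqref{taylored} produces the zeroth-order term $JS_\pi(p(x_i),p(A(x_i)))$, whose average is exactly $\tilde{L}_n^{std}$, together with the first- and second-order coefficients.

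Next I would take the expectations term by term, exploiting that $\xi^{add},\xi^{mult}$ are centered and (taking them uncorrelated) have vanishing mixed second moments. At first order every summand is linear in $\delta z_l$ times a $\xi$-independent log factor; the noise parts of $\delta z_l$ are linear in $\vecc{\xi}$ and drop out under $\mathbb{E}_{\vecc{\xi}}$, leaving only the mixup contributions $\mathbb{E}_\lambda[1-\lambda]\,(\mathbb{E}_{x_r}[x_r]-x_i)$ at $z_0$ and $\mathbb{E}_\lambda[1-\lambda]\,(\mathbb{E}_{x_r}[A(x_r)]-A(x_i))$ at $z_1$, which assemble into $S_1$. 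At second order, each $\delta^2 z_l$ term is linear in $\xi^{mult}$ and vanishes, while the mixup--noise and $\xi^{add}$--$\xi^{mult}$ cross terms vanish by centering and uncorrelatedness; what survives splits cleanly into a purely-mixup part carrying $\mathbb{E}_\lambda[(1-\lambda)^2]$, an $\xi^{add}$-part with $\mathbb{E}[\xi^{add}(\xi^{add})^T]$, and an $\xi^{mult}$-part, yielding the decomposition $\tilde S_2 = S_2 + \sigma_{add}^2 S_2^{add} + \sigma_{mult}^2 S_2^{mult}$ of \eqref{S2}.

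The step I expect to be the main obstacle is the algebraic consolidation of the remaining second-order gradient terms into the compact tensor $q^k$. Concretely, for each class $k$ the diagonal contribution $\tfrac{\pi_l((\nabla p_l^k)^T\delta z_l)^2}{p_l^k}$ must be combined with the matching slice of the single negative term $-\tfrac12\tfrac{(\sum_l \pi_l(\nabla p_l^k)^T\delta z_l)^2}{\sum_l \pi_l p_l^k}$; setting $\pi_0=\pi_1=\tfrac12$ and isolating the pure $l=0$ piece, the coefficient $\tfrac{1}{p^k(x_i)}-\tfrac{1}{p^k(x_i)+p^k(A(x_i))}$ collapses to $\tfrac{p^k(A(x_i))}{p^k(x_i)(p^k(x_i)+p^k(A(x_i)))}$, which is exactly the outer-product term of $q^k(x_i,A(x_i))$, while the $\nabla^2 p^k$ contribution (weighted by the log factor) supplies its first term; the symmetric $l=1$ piece gives $q^k(A(x_i),x_i)$. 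The genuinely mixed cross piece $2(\nabla p^k(x_i))^T\delta z_0\,(\nabla p^k(A(x_i)))^T\delta z_1$ does not fold into any $q^k$ and instead survives as the $-\tfrac{2(\nabla p^k(x_i))^T[\cdots]\nabla p^k(A(x_i))}{p^k(x_i)+p^k(A(x_i))}$ term appearing in each of $S_2, S_2^{add}, S_2^{mult}$. Finally the higher-order remainder is gathered into $\epsilon^2\phi(\epsilon)$ with $\phi(\epsilon)=\mathbb{E}[\varphi(\epsilon)]$ and $\varphi(\epsilon)\to 0$ as $\epsilon\to 0$, exactly as in Theorem~\ref{thm_implicitreg}, which completes the argument.
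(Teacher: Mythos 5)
Your proposal is correct and follows essentially the same route as the paper: instantiate Proposition~\ref{prop_sm} with $M=1$, $\pi_0=\pi_1=1/2$, $z_0=x_i$, $z_1=A(x_i)$ and the quadratic-in-$\epsilon$ perturbations $R_0,R_1$ read off from Lemma~\ref{sm_aux2}, then take expectations using the centering (and uncorrelatedness) of $\vecc{\xi}$ to kill the $\delta^2 z_l$ and cross terms and collect the surviving quadratic forms into $q^k$. Your explicit verification that $\tfrac{1}{p^k(x_i)}-\tfrac{1}{p^k(x_i)+p^k(A(x_i))}$ collapses to the outer-product coefficient of $q^k$ is exactly the ``simplifying and rearranging'' step the paper leaves implicit.
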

\begin{proof}
By assumption, $p$ is twice differentiable, so  we apply Proposition \ref{prop_sm} (with $M=1$,  $K=2$, $\pi_0 = \pi_1 = 1/2$, $z_0 = x_i$, $z_1 = A(x_i)$,  $R_0(\epsilon) = \epsilon (e_i^{mixup} + e_i^{noise}) + \epsilon^2 \sigma_{mult} \xi^{mult} \odot e_i^{mixup}$, and $R_1(\epsilon) = \epsilon (\tilde{e}_i^{mixup} + \tilde{e}_i^{noise}) + \epsilon^2 \sigma_{mult} \xi^{mult} \odot \tilde{e}_i^{mixup}$) to Eq. \eqref{eq_12}. Simplifying (using linearity of expectation, and the expression for the $e_i^{mixup}$, $\tilde{e}_i^{mixup}$, $e_i^{noise}$ and $\tilde{e}_i^{noise}$given in Lemma \ref{sm_aux2}) and rearranging the resulting expression then lead to the expression given in the theorem.
\end{proof}

Theorem \ref{thm_jsd}  implies that, when compared to the stability training in AugMix, the stability training in \textit{NoisyMix}   introduces  additional smoothness, regularizing the directional derivatives, $\nabla p(x_i)$, $\nabla p(A(x_i))$, $\nabla^2 p(x_i)$ and $\nabla^2 p(A(x_i))$, with respect to both the clean data $x_i$ and augmented data $A(x_i)$,  according to the mixing coefficient and the noise levels. 

Combining this with the analogous interpretation of Theorem \ref{thm_implicitreg} and recalling that $L_n^{\textit{NoisyMix} } = L_n^{NFM} + \gamma L_n^{stability}$, we see that \textit{NoisyMix}  amplifies the regularizing effects of AugMix, and can lead to smooth decision boundaries and improvement in model robustness. Since \textit{NoisyMix}  implicitly reduces the $\nabla p(x_i)$ and $\nabla p(A(x_i))$, following the argument in Section C of  \cite{lim2021noisy}, we see that \textit{NoisyMix}  implicitly increases the classification margin, thereby making the model more robust to input perturbations. We shall study another lens in which \textit{NoisyMix}  can improve robustness in the next section.

\section{Robustness of NoisyMix}
\label{sm_secD}

We now demonstrate how \textit{NoisyMix}  training can lead to a model that is both adversarially robust and  stable.


We consider the binary cross-entropy loss, setting $h(z) = \log(1+e^z)$, with the labels $y$ taking value in $\{0,1\}$ and the  classifier model $p: \RR^d \to \RR$. In the following, we assume that the model parameter  $\theta \in \Theta := \{\theta : y_i p(x_i) + (y_i - 1) p(x_i) \geq 0 \text{ for all } i \in [n] \}$. We remark that this set contains the set of all parameters with correct classifications of training samples (before applying \textit{NoisyMix}), since $ \{\theta : 1_{\{p(x_i)  \geq 0\}} = y_i \text{ for all } i \in [n] \} \subset \Theta$. Therefore, the condition of $\theta \in \Theta$ is fulfilled when the model classifies all labels correctly for the training data before applying \textit{NoisyMix}. Since the training error often becomes zero in finite time in practice, we shall study the effect of \textit{NoisyMix}  on model robustness in the regime of $\theta \in \Theta$. 

Working in the data-dependent parameter space $\Theta$, we  obtain the following result. \\

\begin{thm} \label{thm_advbound}
Let $\theta \in \Theta := \{\theta : y_i p(x_i) + (y_i - 1) p(x_i) \geq 0 \text{ for all } i \in [n] \}$ be a point such that $\nabla p(x_i)$, $\nabla p(A(x_i))$, $\nabla^2 p(x_i)$ and $\nabla^2 p(A(x_i))$ exist for all $i \in [n]$. Assume that $p(x_i) = \nabla p(x_i)^T x_i$, $\nabla^2 p(x_i) = 0$ for all $i \in [n]$. In addition, suppose that $\mathbb{E}_{r \sim \mathcal{D}_x}[r] = \mathbb{E}_{r \sim \mathcal{D}_x}[A(r)] = 0$ and $\|x_i\|_2 \geq c_x \sqrt{d}$ for all $i \in [n]$. Then, 
\begin{align}
    L_n^{\textit{NoisyMix} } \geq \frac{1}{n} \sum_{i=1}^n \max_{\|\delta_i \|_2 \leq \epsilon_i^{mix}} l(p(x_i + \delta_i), y_i) + \gamma \tilde{L}_{n}^{JSD} +  \epsilon^2 L_n^{reg} +   \epsilon^2  \phi(\epsilon),
\end{align}
where 
\begin{align}
    \epsilon_i^{mix} &= \epsilon \mathbb{E}_{\lambda \sim \tilde{\mathcal{D}}_\lambda}[1-\lambda] \cdot c_x \sqrt{d} \ |\cos(\nabla p(x_i), x_i)|, \\
    \tilde{L}_n^{JSD} &= \frac{1-\epsilon \mathbb{E}_{\lambda \sim \tilde{\mathcal{D}}_\lambda }[1-\lambda] }{n} \sum_{i=1}^n JS_\pi(p(x_i),p(A(x_{i}))), \\
    L_n^{reg} &= \frac{1}{2n} \sum_{i=1}^n |h''(p(x_i))|  (\epsilon_i^{reg})^2 + \gamma \tilde{S}_2,
\end{align}
with $\tilde{S}_2$ given  in \eqref{S2} and
\begin{align}
    (\epsilon_i^{reg})^2 &=  \|\nabla p(x_i)\|_2^2 \bigg( \mathbb{E}_\lambda[(1-\lambda)]^2 \mathbb{E}_{x_r}[\|x_r\|_2^2 \cos(\nabla p(x_i), x_r)^2 ] \nonumber \\
    &\hspace{0.7cm} + \sigma_{add}^2 \mathbb{E}_{\vecc{\xi}}[\|\xi_{add}\|_2^2 \cos(\nabla p(x_i),\xi_{add})^2]  \nonumber \\ 
    &\hspace{0.7cm} + \sigma_{mult}^2 \mathbb{E}_{\vecc{\xi}}[ \|\xi_{mult} \odot x_i \|_2^2 \cos(\nabla p(x_i),\xi_{mult} \odot x_i)^2  ]  \bigg),
\end{align}
and $\phi$ is some function such that $\lim_{\epsilon \to 0} \phi(\epsilon) = 0$.
\end{thm}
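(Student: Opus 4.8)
The plan is to start from the additive decomposition $L_n^{\textit{NoisyMix}} = L_n^{NFM} + \gamma L_n^{JSD}$ and substitute the two expansions already in hand: Theorem \ref{thm_implicitreg} for $L_n^{NFM}$ and Theorem \ref{thm_jsd} for $L_n^{JSD}$. I would then reorganize the resulting $O(1)$, $O(\epsilon)$ and $O(\epsilon^2)$ terms so that the leading mixup contribution reproduces an adversarial loss, while the quadratic contributions collapse into $L_n^{reg}$. The first simplifications come directly from the hypotheses: $\nabla^2 p(x_i) = 0$ kills $\tilde{R}_3$ in Theorem \ref{thm_implicitreg} (every summand of $R_3, R_3^{add}, R_3^{mult}$ carries a factor $\nabla^2 p(x_i)$), and the centering hypotheses $\mathbb{E}_r[r] = \mathbb{E}_r[A(r)] = 0$ simplify $R_1, R_2, S_1, S_2$. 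Combined with $p(x_i) = \nabla p(x_i)^T x_i$, the centering turns the first-order NFM term into $\epsilon R_1 = -\epsilon\mathbb{E}_\lambda[1-\lambda]\tfrac{1}{n}\sum_i (h'(p(x_i)) - y_i)p(x_i)$.

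For the stability part, I would show that the $O(\epsilon)$ term of $L_n^{JSD}$ is exactly $\epsilon S_1 = -\epsilon\mathbb{E}_\lambda[1-\lambda]\tilde{L}_n^{std}$. Substituting $\mathbb{E}_r[r] = \mathbb{E}_r[A(r)] = 0$ into $S_1$ leaves terms of the form $(\nabla p^k(x_i))^T x_i$ and $(\nabla p^k(A(x_i)))^T A(x_i)$; using $p^k(x_i) = \nabla p^k(x_i)^T x_i$ and the analogous identity at $A(x_i)$ turns each summand of $S_1$ into exactly $-\mathbb{E}_\lambda[1-\lambda]$ times the corresponding summand of $JS_\pi(p(x_i), p(A(x_i)))$. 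Hence $\gamma L_n^{JSD} = \gamma(1 - \epsilon\mathbb{E}_\lambda[1-\lambda])\tilde{L}_n^{std} + \gamma\epsilon^2\tilde{S}_2 + \epsilon^2\phi(\epsilon) = \gamma\tilde{L}_n^{JSD} + \gamma\epsilon^2\tilde{S}_2 + \epsilon^2\phi(\epsilon)$, which isolates the stability objective $\tilde{L}_n^{JSD}$ and supplies the $\gamma\tilde{S}_2$ piece of $L_n^{reg}$.

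The heart of the argument is the NFM/adversarial comparison. For every $\|\delta_i\|_2 \leq \epsilon_i^{mix}$, a Taylor expansion of $l(p(x_i + \delta_i), y_i)$ using $\nabla^2 p(x_i) = 0$ and Cauchy–Schwarz gives the pointwise bound $l(p(x_i + \delta_i), y_i) \leq l(p(x_i), y_i) + |h'(p(x_i)) - y_i|\,\|\nabla p(x_i)\|_2\,\epsilon_i^{mix} + \tfrac{1}{2}h''(p(x_i))\|\nabla p(x_i)\|_2^2(\epsilon_i^{mix})^2 + O(\epsilon^3)$, hence the same upper bound on the maximum (no maximizer need be exhibited). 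Matching this against $L_n^{std} + \epsilon R_1 + \epsilon^2\tilde{R}_2$: on $\Theta$ the sign condition forces $-(h'(p(x_i)) - y_i)p(x_i) = |h'(p(x_i)) - y_i|\,|p(x_i)| \geq 0$, so the $O(\epsilon)$ mixup term is $\epsilon\mathbb{E}_\lambda[1-\lambda]|h'(p(x_i)) - y_i|\,|p(x_i)|$; since $\|\nabla p(x_i)\|_2\,\epsilon_i^{mix} = \epsilon\mathbb{E}_\lambda[1-\lambda]\,c_x\sqrt{d}\,|p(x_i)|/\|x_i\|_2 \leq \epsilon\mathbb{E}_\lambda[1-\lambda]|p(x_i)|$ by $\|x_i\|_2 \geq c_x\sqrt{d}$, the mixup first-order term dominates the adversarial first-order term. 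At second order, rewriting $R_2, R_2^{add}, R_2^{mult}$ with $\mathbb{E}_r[r] = 0$ and the cosine identity $\|\nabla p(x_i)\|_2^2\|v\|_2^2\cos(\nabla p(x_i), v)^2 = (\nabla p(x_i)^T v)^2$ shows the noise pieces of $\epsilon^2\tilde{R}_2$ reproduce exactly the $\sigma_{add}^2, \sigma_{mult}^2$ pieces of $\tfrac{\epsilon^2}{2n}\sum_i |h''(p(x_i))|(\epsilon_i^{reg})^2$; for the mixup piece, the gap between $\mathbb{E}_\lambda[(1-\lambda)^2]$ (in $R_2$) and $\mathbb{E}_\lambda[1-\lambda]^2$ (in $(\epsilon_i^{reg})^2$ and the adversarial quadratic term) is the nonnegative variance $\mathrm{Var}(1-\lambda)$, and $c_x^2 d/\|x_i\|_2^2 \leq 1$, so the quadratic slack is nonnegative term by term. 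Collecting the nonnegative slacks, adding the $\gamma$-JSD contribution, and dumping all $O(\epsilon^3)$ remainders into $\epsilon^2\phi(\epsilon)$ yields the stated inequality.

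I expect the main obstacle to be the second-order bookkeeping: one must simultaneously absorb the adversarial quadratic term $\tfrac{1}{2}h''(p(x_i))\|\nabla p(x_i)\|_2^2(\epsilon_i^{mix})^2$, reconcile the $\mathbb{E}_\lambda[(1-\lambda)^2]$ versus $\mathbb{E}_\lambda[1-\lambda]^2$ mismatch, and verify that the extra $p(x_i)^2$ contribution arising from $\mathbb{E}_r[(x_r - x_i)(x_r - x_i)^T] = \mathbb{E}_r[x_r x_r^T] + x_i x_i^T$ is precisely what covers the adversarial quadratic term, all while keeping $h'' = |h''| > 0$ for the logistic loss so that the term-by-term comparison preserves the direction of the inequality. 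A secondary subtlety is the extension of the linearity hypothesis $p(x_i) = \nabla p(x_i)^T x_i$ to the component maps $p^k$ and to the augmented points $A(x_i)$, which is exactly what makes the first-order JSD cancellation $S_1 = -\mathbb{E}_\lambda[1-\lambda]\tilde{L}_n^{std}$ exact.
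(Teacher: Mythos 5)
Your proposal is correct, and it is worth noting that it is substantially more self-contained than what the paper actually writes: the paper's entire proof is the two-line remark that the inequality ``follows upon applying Theorem 2 in \cite{lim2021noisy} together with Theorem \ref{thm_jsd}.'' In other words, the NFM-versus-adversarial-loss comparison that forms the heart of your argument --- Taylor-expanding $l(p(x_i+\delta_i),y_i)$ with $\nabla^2 p(x_i)=0$, using the sign condition on $\Theta$ so that $-(h'(p(x_i))-y_i)p(x_i)=|h'(p(x_i))-y_i|\,|p(x_i)|\ge 0$, bounding $\|\nabla p(x_i)\|_2\,\epsilon_i^{mix}\le \epsilon\,\mathbb{E}_\lambda[1-\lambda]\,|p(x_i)|$ via $\|x_i\|_2\ge c_x\sqrt{d}$, and absorbing the adversarial quadratic term into the $p(x_i)^2$ piece of $\mathbb{E}_{x_r}[(x_r-x_i)(x_r-x_i)^T]=\mathbb{E}_{x_r}[x_rx_r^T]+x_ix_i^T$ with Jensen covering the $\mathbb{E}_\lambda[(1-\lambda)^2]$ versus $\mathbb{E}_\lambda[1-\lambda]^2$ gap --- is exactly the content of the cited external theorem, which you rederive from Theorem \ref{thm_implicitreg} instead of invoking. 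Your explicit computation that $S_1=-\mathbb{E}_\lambda[1-\lambda]\,\tilde{L}_n^{std}$, which is what turns $\tilde{L}_n^{std}+\epsilon S_1$ into $\tilde{L}_n^{JSD}$, is also left entirely implicit in the paper, and you are right to flag the one genuine gap in the theorem's hypotheses: the exact first-order JSD cancellation needs $p^k(A(x_i))=\nabla p^k(A(x_i))^TA(x_i)$ as well, which the statement does not formally assume but which holds for the homogeneous (ReLU with max-pooling) models the assumption is designed to capture. What your route buys is a verifiable, reference-free proof; what the paper's route buys is brevity at the cost of leaving both the adversarial comparison and the $S_1$ absorption to the reader.
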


\begin{proof}[Proof of Theorem \ref{thm_advbound}] The inequality in the theorem follows upon applying Theorem 2 in \cite{lim2021noisy} together with Theorem \ref{thm_jsd}.
\end{proof}

We remark that the assumption made in Theorem \ref{thm_advbound} is similar to the one made in \cite{lamb2019interpolated,zhang2020does}, and is satisfied by linear models and  deep neural networks with ReLU activation function and max-pooling (for a proof of this, we refer to Section B.2 in \cite{zhang2020does}).

Theorem \ref{thm_advbound}
says that $L_n^{\textit{NoisyMix} }$ is approximately an upper bound of sum of an adversarial loss with $l_2$-attack of size $\epsilon^{mix} = \min_i \epsilon_i^{mix}$, a stability objective, and a data-dependent regularizer. Therefore, {\it minimizing the \textit{NoisyMix}  loss would result in a small regularized adversarial loss, while promoting stability for the model}. This not only amplifies the robustness benefits of Manifold Mixup, but also imposes additional smoothness, due to the noise injection in NFM and  stability training on the noise-perturbed transformed data.
The latter can also help reduce robust overfitting and improve test performance  \cite{rice2020overfitting, rebuffi2021fixing}.

\newpage
\section{Additional Experiments and Details} \label{sm_secG}

In this section we show additional results to support and complement the findings in Section~\ref{sec:results}.

\subsection{Additional ImageNet-C, CIFAR-10-C and CIFAR-100-C Results}

ImageNet-C, CIFAR-10-C, and CIFAR-100-C consist of 15 different corruption types that are used to evaluate the robustness of a given model. These corruption types are illustrated in Figure~\ref{fig:cimages}. Here we show examples that corresponds to severity level 5, i.e., the most severe perturbation level. The considered corruptions cover a wide verity of perturbations that can potentially occur in real-world situations (i.e., a self-driving care might need to navigate in a snow storm).

\begin{figure}[!h]
	\centering
	\includegraphics[width=0.63\textwidth]{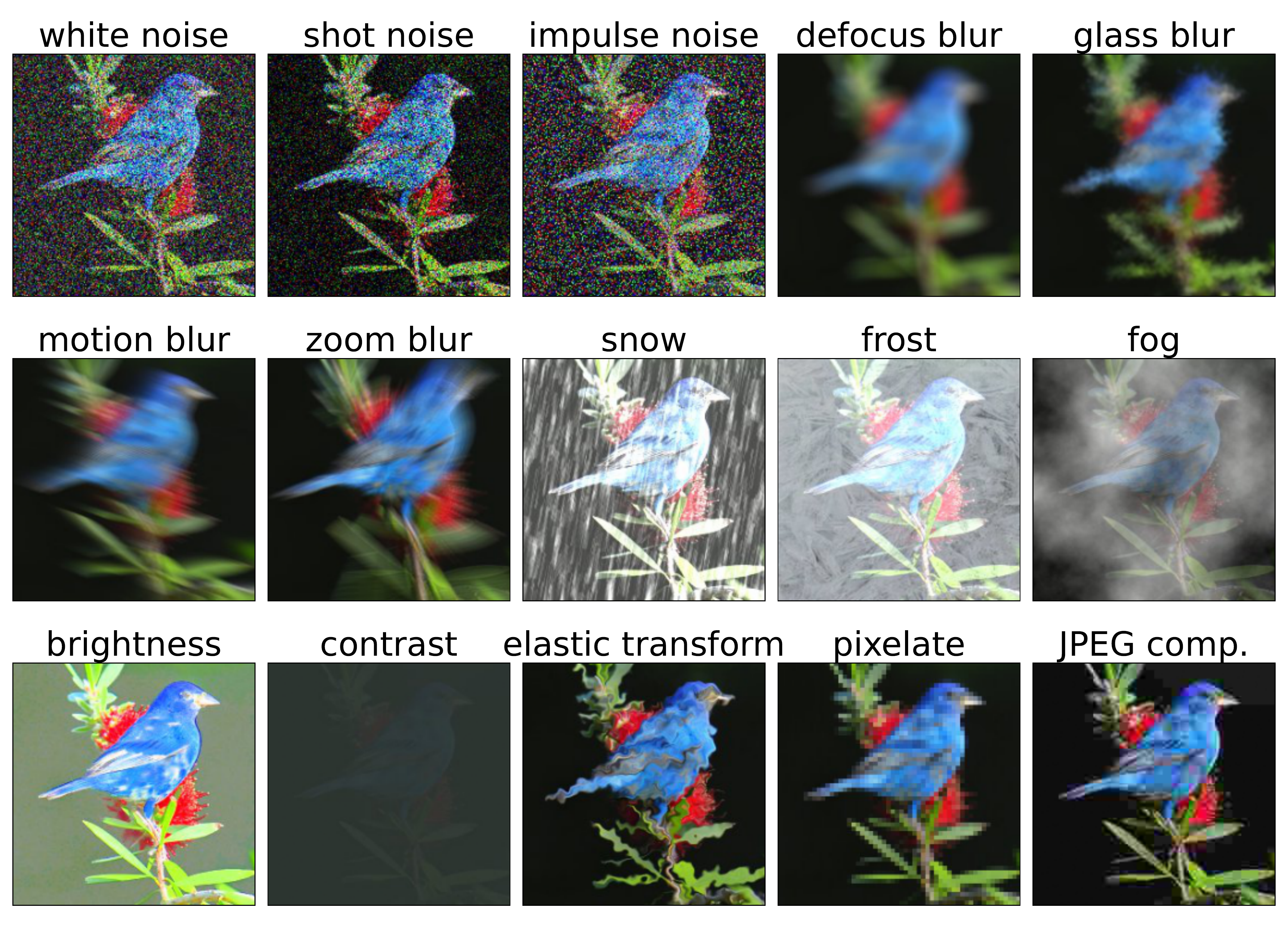}
	\vspace{-0.4cm}
	\caption{The ImageNet-C example shows the 15 different corruption types at severity level 5.}
	\label{fig:cimages}
\end{figure}

\paragraph{ImageNet-C Results.}
Table~\ref{tab:results_imagenet_c} shows the robust accuracy with respect to each perturbation type for the different ResNet-50 models trained on ImageNet-1k.
\textit{NoisyMix} performs best on 9 out of the 15 corruption types. The advantage is particularly pronounced for noise perturbations and weather corruptions. 
To illustrate this further, we compute the average robustness accuracy for the 4 meta categories (i.e., noise, weather, blur, and digital corruptions). The results are shown in Figure~\ref{fig:imagenet_type}, where it can be seen that \textit{NoisyMix} is the most robust model with respect to noise, weather, and digital corruptions, while AugMix is slightly more robust to blur corruptions.
\begin{sidewaystable}
	\caption{Detailed results for ResNet-50s trained with various data augmentation schemes and evaluated on ImageNet-C. All values indicate the robust accuracy (in \%) as a function of the considered 15 perturbation types (higher values are better).}
	\label{tab:results_imagenet_c}
	\centering
	\scalebox{0.85}{
		\setlength\tabcolsep{3pt}
		\begin{tabular}{l|ccc|cccc|cccc|cccccccc}
			\toprule
			& \multicolumn{3}{|c|}{Noise} & \multicolumn{4}{|c|}{Blur} & \multicolumn{4}{|c|}{Weather} & \multicolumn{4}{|c}{Digital} &  \\	
			& White & Shot & Impulse & Defocus & Glass & Motion & Zoom & Snow & Frost & Fog & Bright & Contrast & Elastic & Pixel & JPEG \\
			\hline
			Baseline 					& 29.3&	27.0&	23.8&	38.8&	26.8&	38.7&	36.2&	32.5&	38.1&	45.8&	68.0&	39.1&	45.3&	44.8&	53.4 \\
			Adversarial Trained 		& 22.4&	21.0&	13.1&	24.6&	34.8&	33.7&	35.2&	30.6&	29.6&	6.2&	54.4&	8.5&	49.4&	57.3&	60.5\\
			Stylized ImageNet 			&  41.3&	40.3&	37.2&	42.9&	32.3&	45.5&	35.8&	41.0&	41.6&	47.0&	67.4&	43.3&	49.5&	55.6&	57.7 \\
			Fast AutoAugment 			& 40.9&	40.3&	36.3&	42.0&	35.1&	41.2&	36.3&	40.2&	43.0&	52.9&	\textbf{72.0}&	51.7&	47.5&	48.5&	57.7\\
			Mixup 						& 40.5&	36.9&	34.2&	41.9&	29.1&	43.9&	41.1&	42.0&	45.5&	\textbf{57.5}& {71.4}&	51.0&	47.1&	51.8&	58.6 \\
			Manifold Mixup 				&  36.3&	34.2&	31.0&	39.7&	27.6&	42.0&	40.2&	38.6&	49.0&	{54.6}&	69.1&	51.3&	45.5&	45.7&	54.2 \\
			CutMix 						& 36.0&	34.1&	28.4&	40.2&	25.4&	40.6&	36.9&	34.1&	39.6&	50.3&	70.3&	46.8&	45.2&	36.4&	51.2 \\
			Puzzle Mix 					& 22.4&	21.0&	13.1&	24.6&	34.8&	33.7&	35.2&	30.6&	29.6&	6.2&	54.4&	8.5&	49.4&	\textbf{57.3}&	60.5\\
			AugMix 						&  40.6&	41.1&	37.7&	\textbf{47.7}&	34.9&	\textbf{53.5}&	\textbf{49.0}&	39.9&	43.8&	47.1&	69.5&	51.1&	52.0&	57.0&	60.3\\
			\textit{NoisyMix}  (ours) 	& \textbf{53.1}&	\textbf{52.5}&	\textbf{51.7}&	47.1&	\textbf{37.5} &	52.2&	47.2&	\textbf{45.0}&	\textbf{52.6}&	52.9&	71.1&	\textbf{52.6}&	\textbf{52.6}&	53.2&	\textbf{63.8}	 \\
			
			\bottomrule
	\end{tabular}}
\end{sidewaystable}

\begin{figure}[!t]
	\centering
	\includegraphics[width=0.65\textwidth]{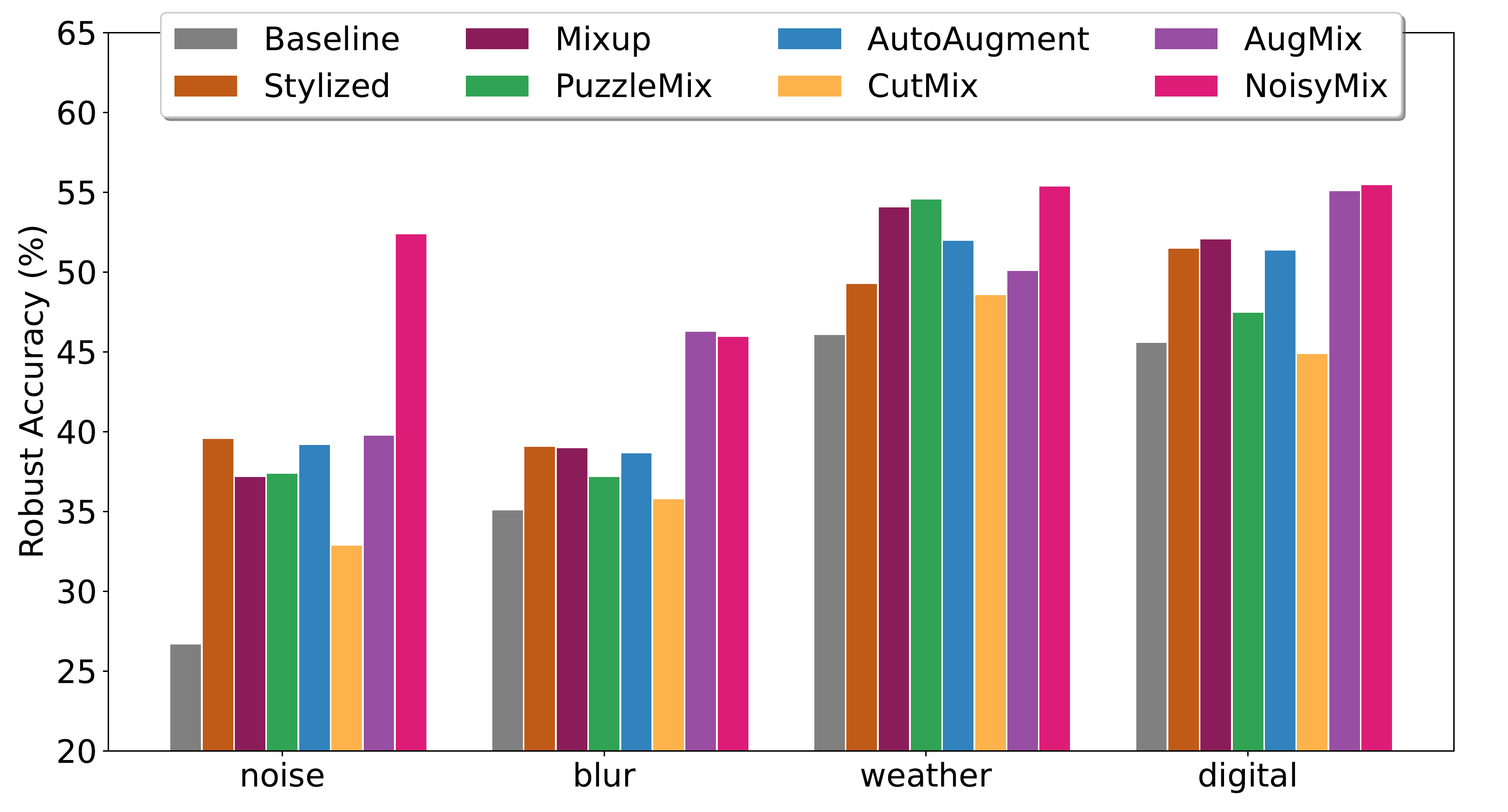}
	\vspace{-0.4cm}
	\caption{Average robust accuracy of ResNet-50 models for the 4 meta categories noise, blur, weather, and digital distortions. \textit{NoisyMix} dominates in 2 out of the 4 categories, while the robustness to digital corruptions is nearly on par with AugMix.} 
	\label{fig:imagenet_type}
\end{figure}

\begin{sidewaystable}
	\caption{Detailed results for ResNet-18s trained with various data augmentation schemes and evaluated on CIFAR-10-C. All values indicate the robust accuracy (in \%) as a function of the considered 15 perturbation types (higher values are better).}
	\label{tab:results_cifar10_c}
	\centering
	\scalebox{0.85}{
		\setlength\tabcolsep{3pt}
		\begin{tabular}{l|ccc|cccc|cccc|cccccccc}
			\toprule
			& \multicolumn{3}{|c|}{Noise} & \multicolumn{4}{|c|}{Blur} & \multicolumn{4}{|c|}{Weather} & \multicolumn{4}{|c}{Digital} &  \\	
			& White & Shot & Impulse & Defocus & Glass & Motion & Zoom & Snow & Frost & Fog & Bright & Contrast & Elastic & Pixel & JPEG \\
			\hline
			Baseline 					& 46.4&	59.2&	52.6&	82.6&	53.1&	78.0&	77.5&	82.6&	78.3&	88.4&	93.8&	76.7&	84.4&	74.5&	79.3 \\
			Mixup 						& 58.6&	68.3&	53.9&	88.2&	69.2&	84.0&	84.0&	88.5&	87.9&	91.2&	94.4&	88.5&	87.6&	79.8&	82.1 \\
			Manifold Mixup 				& 55.5&	66.4&	48.2&	85.7&	60.5&	82.3&	82.5&	87.3&	84.9&	90.7&	94.5&	85.1&	86.2&	75.3&	80.9  \\
			CutMix 						& 29.5&	42.2&	50.2&	83.7&	58.7&	81.0&	78.4&	87.7&	80.9&	90.7&	94.8&	83.5&	86.4&	75.3&	71.5\\
			Puzzle Mix 					& 76.5&	81.7&	69.0&	86.4&	77.5&	83.3&	82.0&	90.2&	90.1&	91.6&	\textbf{95.3}&	87.2&	88.8&	85.3&	77.0 \\
			NFM 						& 78.7&	83.5&	73.4&	85.4&	72.0&	80.3&	83.2&	89.0&	89.5&	87.1&	94.2&	76.2&	86.2&	82.5&	87.9 \\			
			AugMix 						& 79.8&	85.2&	85.5&	\textbf{94.3} &	79.6&	\textbf{92.4} &	\textbf{93.3}&	89.7&	89.4&	\textbf{92.0}&	{94.6}&	\textbf{90.7} &	90.6&	87.7&	87.7 \\
			\textit{NoisyMix}  (ours) 	& \textbf{89.9} &	\textbf{91.8} &	\textbf{94.6} &	93.6&	\textbf{85.9}&	91.6&	92.7&	\textbf{91.1} &	\textbf{91.7}&	91.1&	94.5&	88.1&	\textbf{90.9}&	\textbf{89.3}&	\textbf{90.7}	 \\
			\bottomrule
	\end{tabular}}
\end{sidewaystable}

\begin{sidewaystable}
	\caption{Detailed results for ResNet-18s trained with various data augmentation schemes and evaluated on CIFAR-100-C. All values indicate the robust accuracy (in \%) as a function of the considered 15 perturbation types (higher values are better).}
	\label{tab:results_cifar100_c}
	\centering
	\scalebox{0.85}{
		\setlength\tabcolsep{3pt}
		\begin{tabular}{l|ccc|cccc|cccc|cccccccc}
			\toprule
			& \multicolumn{3}{|c|}{Noise} & \multicolumn{4}{|c|}{Blur} & \multicolumn{4}{|c|}{Weather} & \multicolumn{4}{|c}{Digital} &  \\	
			& White & Shot & Impulse & Defocus & Glass & Motion & Zoom & Snow & Frost & Fog & Bright & Contrast & Elastic & Pixel & JPEG \\
			\hline
			Baseline 					& 22.1&	30.3&	24.6&	60.2&	21.3&	54.5&	53.8&	54.8&	48.9&	64.5&	73.4&	55.1&	60.6&	51.8&	50.6\\
			Mixup 						& 27.2&	36.0&	21.7&	65.9&	25.8&	62.3&	60.7&	63.7&	57.4&	70.2&	75.9&	67.9&	64.8&	58.7&	55.5 \\
			Manifold Mixup 				& 24.7&	33.6&	27.0&	65.1&	23.1&	60.8&	60.3&	63.8&	57.3&	69.2&	\textbf{76.7}&	62.8&	64.6&	57.2&	54.2\\
			CutMix 						& 13.1&	20.6&	29.8&	62.4&	22.3&	56.0&	56.5&	59.8&	49.6&	65.4&	74.2&	56.5&	61.6&	45.4&	45.7 \\
			Puzzle Mix 					& 47.3&	54.3&	40.1&	64.1&	33.1&	59.6&	59.0&	64.5&	59.4&	68.0&	75.7&	60.7&	64.4&	56.6&	54.4\\
			NFM 						& 50.2&	57.5&	42.1&	64.8&	34.9&	59.8&	61.0&	66.0&	64.7&	65.8&	76.0&	57.1&	65.3&	64.7&	64.8 \\			
			AugMix 						& 46.8&	54.9&	61.5&	74.9&	50.9&	72.2&	73.0&	65.9&	63.7&	67.1&	74.2&	67.8&	68.3&	66.3&	61.5\\
			\textit{NoisyMix}  (ours) 	& \textbf{65.3}&	\textbf{69.2}&	\textbf{77.8}&	\textbf{76.8}&	\textbf{58.6}&	\textbf{73.7}&	\textbf{75.4}&	\textbf{71.1}&	\textbf{69.7}&	\textbf{71.0}&	\textbf{76.7}&	\textbf{68.0}&	\textbf{72.2}&	\textbf{70.5}&	\textbf{68.9}	 \\
			
			\bottomrule
	\end{tabular}}
\end{sidewaystable}

\paragraph{CIFAR-C Results.}
Table~\ref{tab:results_cifar10_c} shows the detailed robustness properties of ResNet-18 models trained with different data augmentation schemes and evaluated on CIFAR-10-C. Table~\ref{tab:results_cifar100_c} shows similar results for ResNet-18 models that are evaluated on CIFAR-100-C.
Overall, \textit{NoisyMix} performs best across most of the different input perturbations and data corruptions as compared to various baselines in the data augmentation space. In general, the combination of a noisy training scheme paired with stability training appears to prescribe our model a bias towards stochastic noise due to the nature of fitting a reconstruction over a stochastic representation. 
This can be seen by the good robust accuracy on the noise, weather and digital perturbations being higher for \textit{NoisyMix} and lower in the other methods that place less encompassing emphasis on stochastic perturbations. Overall, the advantage of training models with \textit{NoisyMix} is more pronounced for the CIFAR-100 task, where \textit{NoisyMix} dominates AugMix. 

The limitations of certain neural network architectures in terms of their abilities to learn high-frequency features in low-dimensional domains~\cite{tancik2020fourier} could also imply that our model will tend towards learning features similar in frequency to the in-domain task, and hence perform most accurately in specific out-of-domain perturbations on the same dataset containing similar frequencies. We leave this to future work.

\begin{sidewaystable}
	\caption{Detailed results for Wide-ResNet models trained with data augmentation schemes and evaluated on CIFAR-10-C. All values indicate the robust accuracy (in \%) as a function of the considered 15 perturbation types (higher values are better).}
	\label{tab:results_wrs_cifar10_c}
	\centering
	\scalebox{0.85}{
		\setlength\tabcolsep{3pt}
		\begin{tabular}{l|ccc|cccc|cccc|cccccccc}
			\toprule
			& \multicolumn{3}{|c|}{Noise} & \multicolumn{4}{|c|}{Blur} & \multicolumn{4}{|c|}{Weather} & \multicolumn{4}{|c}{Digital} &  \\	
			& White & Shot & Impulse & Defocus & Glass & Motion & Zoom & Snow & Frost & Fog & Bright & Contrast & Elastic & Pixel & JPEG \\
			\hline
			Baseline 					& 46.5&	59.6&	50.7&	83.0&	57.3&	79.0&	78.4&	85.1&	81.0&	89.2&	94.6&	77.4&	85.4&	76.5&	79.4 \\
			Mixup 						& 56.5&	67.9&	53.4&	88.7&	69.8&	84.9&	85.4&	90.4&	90.2&	91.9&	95.3&	87.9&	88.3&	78.8&	83.3 \\
			Manifold Mixup 				& 48.0&	61.1&	56.4&	85.7&	60.1&	81.6&	82.3&	89.0&	85.9&	91.4&	95.4&	83.3&	86.4&	76.3&	81.7 \\
			CutMix 						& 44.2&	36.3&	46.6&	83.2&	57.0&	80.7&	76.4&	87.7&	81.3&	91.6&	95.1&	85.5&	85.3&	72.6&	70.4\\
			Puzzle Mix 					& 72.6&	79.1&	58.6&	85.9&	72.5&	83.7&	80.8&	91.2&	89.8&	92.8&	\textbf{95.8}&	89.5&	88.2&	82.8&	75.8\\
			NFM 						& 74.8&	81.2&	71.4&	84.7&	69.6&	78.9&	81.7&	89.7&	89.5&	88.2&	95.0&	73.9&	86.8&	81.9&	87.9\\			
			AugMix 						& 81.0&	86.3&	87.9&	\textbf{95.5}&	80.9&	\textbf{93.9}&	\textbf{94.5}&	91.4&	91.0&	93.3&	95.7&	\textbf{92.1}&	91.8&	89.4&	88.4\\
			\textit{NoisyMix}  (ours) 	& \textbf{87.0}&	\textbf{90.2}&	\textbf{95.7}&	95.3&	\textbf{84.2}&	93.4&	94.4&	\textbf{92.3}&	\textbf{92.2}&	\textbf{93.4}&	\textbf{95.8}&	89.8&	\textbf{92.5}&	\textbf{89.5}&	\textbf{90.8} \\
			\bottomrule
	\end{tabular}}
\end{sidewaystable} 

\begin{sidewaystable}
	\caption{Detailed results for Wide-ResNet models trained with data augmentation schemes and evaluated on CIFAR-100-C. All values indicate the robust accuracy (in \%) as a function of the considered 15 perturbation types (higher values are better).}
	\label{tab:results_wrs_cifar100_c}
	\centering
	\scalebox{0.85}{
		\setlength\tabcolsep{3pt}
		\begin{tabular}{l|ccc|cccc|cccc|cccccccc}
			\toprule
			& \multicolumn{3}{|c|}{Noise} & \multicolumn{4}{|c|}{Blur} & \multicolumn{4}{|c|}{Weather} & \multicolumn{4}{|c}{Digital} &  \\	
			& White & Shot & Impulse & Defocus & Glass & Motion & Zoom & Snow & Frost & Fog & Bright & Contrast & Elastic & Pixel & JPEG \\
			\hline
			Baseline 					& 18.4&	27.0&	19.1&	60.6&	18.4&	54.0&	54.5&	55.7&	48.1&	64.4&	74.9&	54.0&	60.4&	51.9&	50.2 \\
			Mixup 						& 25.3&	34.3&	21.0&	66.5&	23.6&	61.8&	61.3&	64.5&	56.1&	71.1&	77.1&	68.2&	65.4&	56.3&	55.8\\
			Manifold Mixup 				& 20.5&	29.7&	24.2&	65.7&	23.9&	60.9&	60.9&	64.8&	56.8&	70.6&	\textbf{78.2}&	63.3&	65.1&	57.2&	55.3 \\
			CutMix 						& 13.1&	20.8&	31.2&	62.2&	22.6&	57.0&	55.7&	61.2&	52.4&	67.8&	75.4&	59.2&	61.2&	43.7&	44.1\\
			Puzzle Mix 					& 38.6&	47.7&	33.0&	64.4&	29.4&	60.5&	58.7&	66.3&	61.2&	71.2&	77.5&	65.0&	64.9&	54.1&	51.9\\
			NFM 						& 49.4&	57.2&	39.4&	62.9&	35.8&	57.3&	58.3&	66.5&	65.5&	66.0&	76.7&	55.4&	64.3&	62.2&	63.8 \\			
			AugMix 						& 46.6&	55.5&	65.2&	\textbf{78.2}&	49.3&	\textbf{74.8}&	\textbf{76.1}&	69.0&	66.1&	71.5&	77.6&	\textbf{69.9}&	71.2&	66.9&	63.2\\
			\textit{NoisyMix}  (ours) 	& \textbf{68.4}&	\textbf{71.9}&	\textbf{80.0}&	77.2&	\textbf{59.5}&	73.8&	75.6&	\textbf{71.7}&	\textbf{70.7}&	\textbf{72.2}&	77.7&	67.1&	\textbf{72.7}&	\textbf{70.8}&	\textbf{69.3} \\
			\bottomrule
	\end{tabular}}
\end{sidewaystable}

\begin{figure}[!t]
	\centering
	\includegraphics[width=0.65\textwidth]{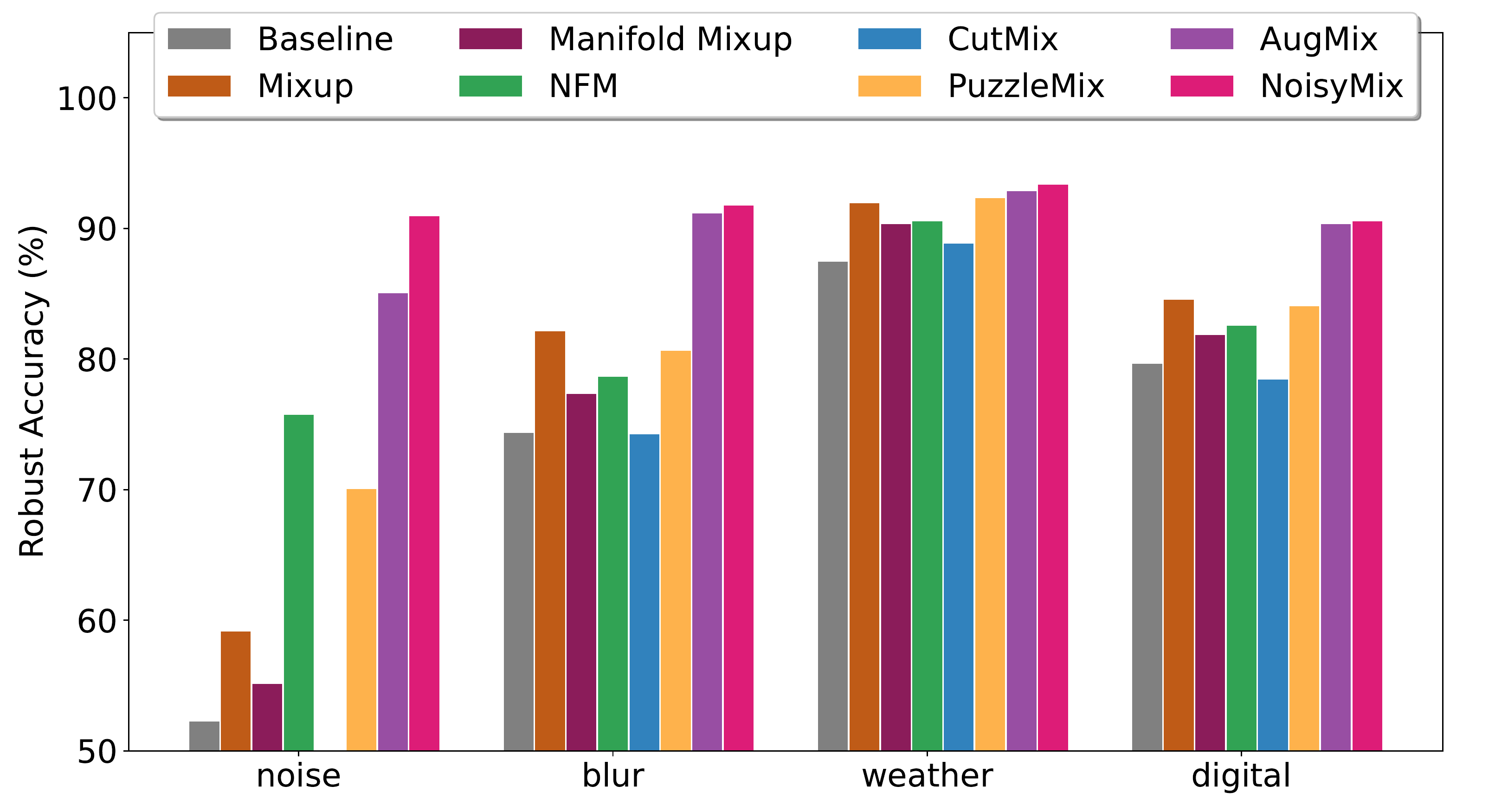}
	\vspace{-0.4cm}
	\caption{Average robust accuracy of Wide-ResNet-28x2 models, trained on CIFAR-10, for the 4 categories noise, blur, weather, and digital distortions. \textit{NoisyMix} dominates in 1 out of the 4 categories, while the robustness to blur, weather, and digital corruptions is nearly on par with AugMix.} 
	\label{fig:cifar10_wrs_type}
\end{figure}

\begin{figure}[!t]
	\centering
	\includegraphics[width=0.65\textwidth]{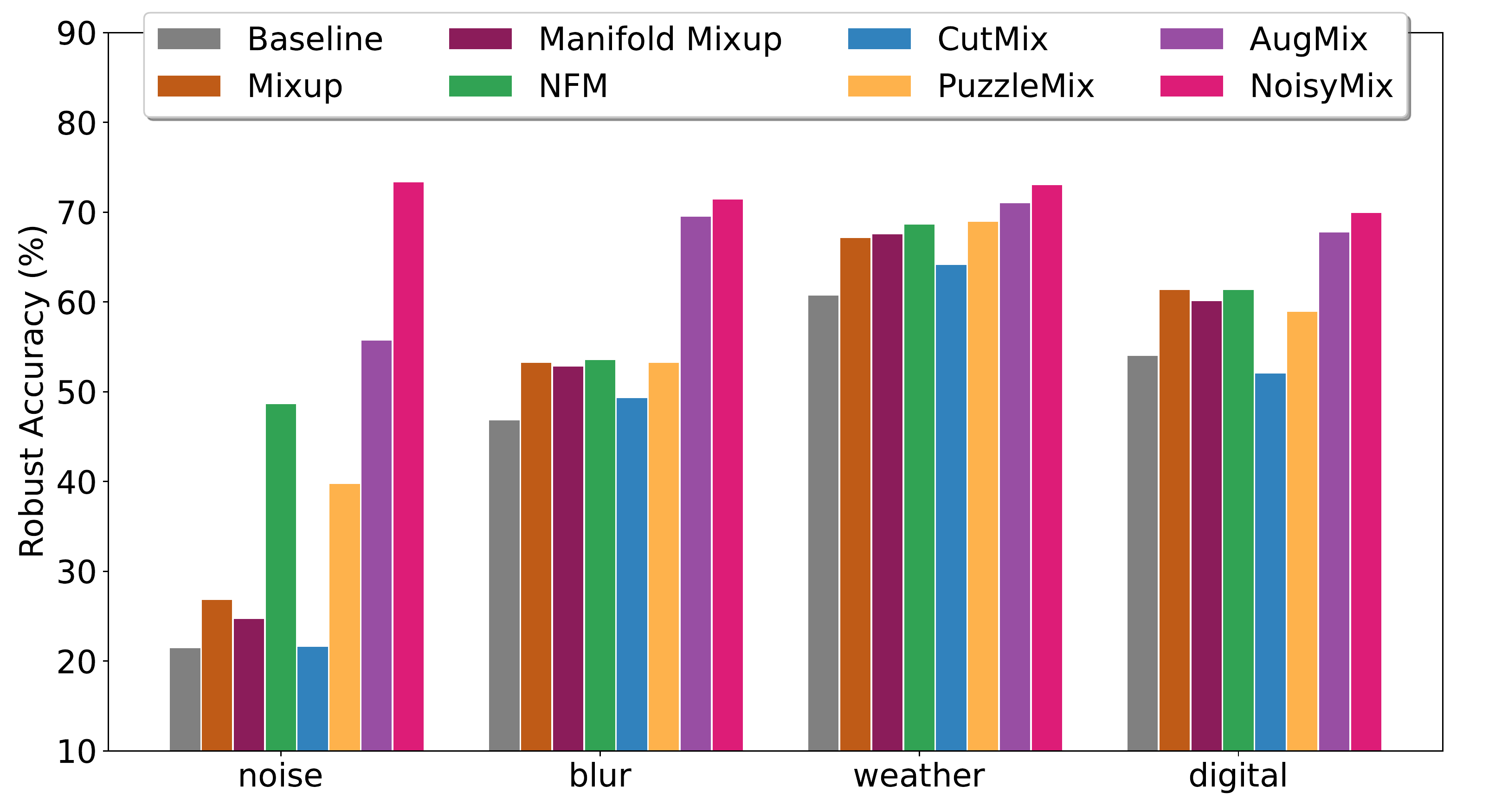}
	\vspace{-0.4cm}
	\caption{Average robust accuracy of Wide-ResNet-28x2 models, trained on CIFAR-100, for the 4 categories noise, blur, weather, and digital distortions. \textit{NoisyMix} dominates in 4 out of the 4 categories.} 
	\label{fig:cifar100_wrs_type}
\end{figure}

Table~\ref{tab:results_wrs_cifar10_c} shows the detailed robustness properties of Wide-ResNet-28x2 models evaluated on CIFAR-10-C, and Table~\ref{tab:results_wrs_cifar100_c} results for CIFAR-100-C, respectively. \textit{NoisyMix} benefits from the wide architecture and can improve the robustness as compared to the ResNet-18 models. Again, it can be seen that \textit{NoisyMix} significantly improves the robustness to noise perturbations as well as to weather corruptions.
Figure~\ref{fig:cifar10_wrs_type}, and Figure~\ref{fig:cifar100_wrs_type} show the average robustness accuracy for the 4 meta categories (i.e., noise, weather, blur, and digital corruptions) for CIFAR-10-C and CIFAR-100-C, respectively.

\subsection{Calibration Results for ImageNet-R} 

State-of-the-art neural networks tend to be biased towards high accuracy prediction, while being poorly calibrated. In turn, these models tend to be less reliable and often also show to lack fairness.
This can lead to poor decision making in mission critical applications, e.g., medical imaging. In such applications it might be not sufficient to just compute a score $s(x)$ that can then be used to rank the input from the most probable member to the least probable member of a class $c$. Rather it is desirable to obtain an reliable estimate for the class membership probability that can be assigned an example-dependent misclassifications cost~\cite{zadrozny2001obtaining}.
Typically, these probability estimates are obtained by squashing the activations of the output layer through a normalized exponential function that is known as the softmax function.

One hypothesis that explains why DNNs (such as ResNets) are poorly calibrated is their high capacity, i.e., these models have the ability to learn highly-specific features that overfit to the source dataset. This phenomenom is shown by \cite{guo2017calibration}, where calibration error increases as the number of filters per layer increases.
We show that data augmentation can mitigate this effect and produce better calibrated models. 
Table~\ref{tab:results_imagenet_r_calibration} shows the RMS calibration error and the area under the response rate accuracy curve (AURRA), evaluated on IamgeNet-R, as metrics to measure how well a model is calibrated.
\textit{NoisyMix} considerably improves the calibration of the ImageNet model, as compared to standard training or training with other data augmentation methods.
We believe that this is explained by the fact that \textit{NoisyMix} ``wash-out'' features that are highly task specific.

\begin{table*}[!b]
	\caption{Calibration errors  measured by the RMS calibration error (lower values are better), and the area under the response rate accuracy curve (higher values are better).}
	\label{tab:results_imagenet_r_calibration}
	\centering
	\scalebox{0.85}{
		\begin{tabular}{lccccccccc}
			\toprule
			& RMS Calibration Error ($\downarrow$\%) & AURRA ($\uparrow$\%)\\			
			\midrule 
			Baseline ResNet-50								& 19.7 & 64.6\\
			Adversarial Trained~\cite{NEURIPS2020_24357dd0} & 8.9 & 68.8 \\
			Stylized ImageNet~\cite{geirhos2018imagenet} 	& 16.2 & 69.7\\
			AutoAugment~\cite{cubuk2019autoaugment}			& 19.9 & 67.6\\
			
			Mixup~\cite{zhang2018mixup} 					& 17.5 & 68.6 \\
			Manifold Mixup~\cite{verma2019manifold} 		& 5.0 & 68.7\\
			CutMix~\cite{yun2019cutmix}						& 13.6 & 63.2 \\
			Puzzle Mix~\cite{kim2020puzzle}					& 15.8 & 68.8\\
			AugMix~\cite{hendrycks2020augmix} 				& 14.5 & 70.1\\
			
			\textit{NoisyMix}  (ours) 						& \textbf{3.9} & \textbf{74.8}\\
			
			\bottomrule
	\end{tabular}}
\end{table*}

\subsection{Additional ImageNet-P and CIFAR-10-P Results}

\begin{table*}[!b]
	\caption{Detailed results for ResNet-50s trained with various data augmentation schemes and evaluated on ImageNet-P. The values indicate the flipping rates as a function of the 10 perturbation types (lower values are better). The mean flip rate (mFR) summarizes the results and shows that \textit{NoisyMix}  improves perturbation stability by approximately 5\%.}
	\label{tab:results_imagenet_p}
	\centering
	\scalebox{0.8}{
		\begin{tabular}{l|cc|cc|cc|cccc|cccccc}
			\toprule
			& \multicolumn{2}{|c|}{Noise} & \multicolumn{2}{|c|}{Blur} & \multicolumn{2}{|c|}{Weather} & \multicolumn{4}{|c|}{Digital} &  \\	
			& White & Shot & Motion & Zoom & Snow & Bright & Translate & Rotate & Tilt & Scale & mFR\\
			\hline
			Baseline 										& 59.4 &	57.8 & 64.5 & 72.1 & 63.2 & 61.9 & 44.2 & 51.9 & 56.9 & 48.1 & 58.0\\
			Adversarial Trained & 25.6&	40.1&	\textbf{26.9}&	\textbf{33.1}&	\textbf{18.3}&	60.6&	25.7&	31.3&	30.5&	40.4 & 33.3\\
			Stylized ImageNet & 31.1&	28.8&	41.8&	58.1&	43.1&	50.3&	36.7&	41.2&	42.9&	42.1& 54.4\\
			Fast AutoAugment & 53.7&	49.9&	63.8&	79.3&	66.5&	58.2&	43.2&	49.0&	57.7&	43.6& 56.6 \\
			Mixup & 47.6&	47.2&	60.8&	73.9&	58.9&	62.1&	51.2&	55.0&	61.1&	46.3& 56.4 \\
			Manifold Mixup 	& 49.1&	46.2&	66.1&	72.4&	62.6&	59.9&	46.5&	53.7&	58.8&	45.2& 56.0\\
			CutMix & 54.7&	52.7&	69.4&	77.4&	73.1&	59.2&	43.2&	51.6&	59.1&	46.1& 58.6 \\
			Puzzle Mix & 50.1&	48.3&	64.6&	72.5&	63.3&	56.7&	44.3&	50.4&	57.9&	46.6 & 55.5\\
			AugMix 			& 40.9&	37.8&	30.0&	54.8&	37.3&	46.1&	26.9&	32.3&	36.4&	33.8 & 37.6\\
			\textit{NoisyMix}  (ours) 	& \textbf{25.1}&	\textbf{22.8}&	30.2&	41.5&	32.4&	\textbf{34.1}&	\textbf{20.5}&	\textbf{24.5}&	\textbf{27.7}&	\textbf{26.3} & \textbf{28.5}\\
			
			\bottomrule
	\end{tabular}}
\end{table*}

\begin{table*}[!t]
	\caption{Detailed results for ResNet-18s trained with various data augmentation schemes and evaluated on CIFAR-10-P. The values indicate the flipping rates as a function of the 10 perturbation types (lower values are better). The mean flip rate (mFR) summarizes the results and shows that \textit{NoisyMix}  improves perturbation stability by approximately 1\%.}
	\label{tab:results_cifar_p}
	\centering
	\scalebox{0.8}{
		\begin{tabular}{l|cc|cc|cc|cccc|cccccc}
			\toprule
			& \multicolumn{2}{|c|}{Noise} & \multicolumn{2}{|c|}{Blur} & \multicolumn{2}{|c|}{Weather} & \multicolumn{4}{|c|}{Digital} &  \\	
			& White & Shot & Motion & Zoom & Snow & Bright & Translate & Rotate & Tilt & Scale & mFR\\
			\hline
			Baseline 					& 44.6&	27.6&	9.2&	0.4&	2.5&	0.5&	2.1&	2.8&	0.8&	3.1&	9.4\\
			Mixup 						& 26.3&	17.2&	6.1&	0.4&	1.8&	0.6&	2.2&	2.4&	0.8&	2.5&	6.0 \\
			Manifold Mixup 				& 33.4&	21.6&	7.8&	0.4&	2.1&	0.6&	2.2&	2.6&	0.8&	2.9&	7.4\\
			NFM 						& 11.5&	7.5&	6.8&	0.3&	1.5&	\textbf{0.4}&	1.8&	2.0&	0.6&	2.2&	3.5\\			
			CutMix 						& 72.3&	50.8&	9.1&	0.5&	1.9&	0.6&	1.9&	2.3&	0.8&	3.1&	14.3 \\
			Puzzle Mix 					& 16.1&	11.2&	6.3&	0.3&	1.4&	\textbf{0.4}&	2.3&	1.8&	0.6&	2.5&	4.3\\
			AugMix 						& 11.4&	7.1&	2.4&	0.2&	1.0&	\textbf{0.4}&	1.9&	1.2&	0.5&	1.8&	2.8\\
			\textit{NoisyMix}  (ours) 	& \textbf{6.6}&	\textbf{4.5}&	\textbf{2.3}&	\textbf{0.1}&	\textbf{0.9}&	\textbf{0.4}&	\textbf{1.6}&	\textbf{1.0}&	\textbf{0.4}&	\textbf{1.4}&	\textbf{1.9}\\
			
			\bottomrule
	\end{tabular}}
\end{table*} 

\begin{table*}[!t]
	\caption{Detailed results for Wide-ResNet-28x2 trained with various data augmentation schemes and evaluated on CIFAR-10-P. The values indicate the flipping rates as a function of the 10 perturbation types (lower values are better). The mean flip rate (mFR) summarizes the results and shows that \textit{NoisyMix}  improves perturbation stability by approximately 5\%.}
	\label{tab:results_cifar_p_wrs}
	\centering
	\scalebox{0.8}{
		\begin{tabular}{l|cc|cc|cc|cccc|cccccc}
			\toprule
			& \multicolumn{2}{|c|}{Noise} & \multicolumn{2}{|c|}{Blur} & \multicolumn{2}{|c|}{Weather} & \multicolumn{4}{|c|}{Digital} &  \\	
			& White & Shot & Motion & Zoom & Snow & Bright & Translate & Rotate & Tilt & Scale & mFR\\
			\hline
			Baseline 					& 42.7&	27.6&	8.2&	0.3&	2.1&	0.4&	1.4&	2.4&	0.6&	2.6&	8.8\\
			Mixup 						& 30.9&	18.5&	5.7&	0.3&	1.5&	0.5&	1.6&	1.8&	0.7&	2.1&	6.4\\
			Manifold Mixup 				& 43.8&	26.6&	7.0&	0.3&	1.6&	0.4&	1.3&	1.9&	0.6&	2.2&	8.6\\
			NFM 						& 13.1&	7.8&	8.0&	0.3&	1.4&	0.4&	1.4&	2.0&	0.6&	2.1&	3.7\\			
			CutMix 						& 78.0&	60.1&	9.6&	0.5&	1.9&	0.6&	1.5&	2.4&	0.8&	3.2&	15.9 \\
			Puzzle Mix 					& 14.6&	9.5&	7.1&	0.4&	1.3&	0.5&	1.8&	1.9&	0.7&	2.5&	4.0\\
			AugMix 						& 11.2&	6.6&	2.1&	0.2&	0.9&	\textbf{0.3}&	1.3&	1.0&	0.4&	1.3&	2.5\\
			\textit{NoisyMix}  (ours) 	& \textbf{6.3}&	\textbf{4.2}&	\textbf{2.0}&	\textbf{0.1}&	\textbf{0.7}&	\textbf{0.3}&	\textbf{1.0}&	\textbf{0.8}&	\textbf{0.3}&	\textbf{1.1}&	\textbf{1.7}\\
			
			\bottomrule
	\end{tabular}}
\end{table*}

\paragraph{ImageNet-C Results.}
Table~\ref{tab:results_imagenet_p} shows the flip rates with respect to each perturbation type for the different ResNet-50 models trained on ImageNet-1k.
\textit{NoisyMix} dominates on 6 out of the 10 corruption types, resulting in the lowest mean flip rate overall. While \textit{NoisyMix} shows a clear advantage compared to the model trained with AugMix, the adversarial trained model is performing nearly as good as \textit{NoisyMix} on this task. This is surprising, since the adversarial trained model shows a poor performance on the ImageNet-C task. However, this shows the many different facets of robustness and supports the claim that there is no single metric for measuring robustness. 

Table~\ref{tab:results_cifar_p} and~\ref{tab:results_cifar_p_wrs} show results for ResNet-18 and Wide-ResNet-28x2 models trained on CIFAR-10 and evaluated on CIFAR-10-P. Here, the models trained with \textit{NoisyMix} clearly dominate. Note, that we do not normalize the flip rates here.

\end{document}